\theoremstyle{plain}
\newtheorem{thm}{Theorem}[section]
\newtheorem{Prop}[thm]{Proposition}
\theoremstyle{definition}
\newtheorem{Def}[thm]{Definition}
\theoremstyle{remark}
\newtheorem{Rem}[thm]{Remark}
\newcommand{\md}{{\mathrm{d}}}
\newcommand{\mR}{{\mathbb{R}}}
\newcommand{\mZ}{{\mathbb{Z}}}
\newcommand{\mL}{{\mathcal{L}}}
\newcommand{\trans}{\top}
\DeclareMathOperator{\Span}{span}
\icmltitlerunning{Learning to simulate partially known spatio-temporal dynamics with trainable difference operators}
\begin{document}

\twocolumn[

    \icmltitle{Learning to simulate partially known spatio-temporal dynamics \\ with trainable difference operators}



    \icmlsetsymbol{equal}{*}

    \begin{icmlauthorlist}
        \icmlauthor{Xiang Huang}{equal,xxx,comp}
        \icmlauthor{Zhuoyuan Li}{equal,yyy}
        \icmlauthor{Hongsheng Liu}{comp}
        \icmlauthor{Zidong Wang}{comp}
        \icmlauthor{Hongye Zhou}{comp}
        \icmlauthor{Bin Dong \textsuperscript{\Letter}}{aaa,bbb}
        \icmlauthor{Bei Hua \textsuperscript{\Letter}}{xxx}
    \end{icmlauthorlist}

    \icmlaffiliation{xxx}{University of Science and Technology of China}
    \icmlaffiliation{yyy}{Peking University}
    \icmlaffiliation{aaa}{Beijing International Center for Mathematical Research, Peking University}
    \icmlaffiliation{bbb}{Center for Machine Learning Research, Peking University}
    \icmlaffiliation{comp}{Huawei Technologies Co. Ltd}

    \icmlcorrespondingauthor{Bin Dong}{dongbin@math.pku.edu.cn}
    \icmlcorrespondingauthor{Bei Hua}{bhua@ustc.edu.cn}

    \icmlkeywords{Machine Learning, ICML}

    \vskip 0.3in
]

\printAffiliationsAndNotice{\icmlEqualContribution} 

\begin{abstract}
    Recently, using neural networks to simulate spatio-temporal dynamics has received a lot of attention. However, most existing methods adopt pure data-driven black-box models, which have limited accuracy and interpretability. By combining trainable difference operators with black-box models, we propose a new hybrid architecture explicitly embedded with partial prior knowledge of the underlying PDEs named PDE-Net++. Furthermore, we introduce two distinct options called the trainable flipping difference layer (TFDL) and the trainable dynamic difference layer (TDDL) for the difference operators. Numerous numerical experiments have demonstrated that PDE-Net++ has superior prediction accuracy and better extrapolation performance than black-box models.
\end{abstract}

\section{Introduction} \label{sec:intro}
Simulating complex spatio-temporal dynamics governed by partial differential equations (PDEs) has been a core of physical science and engineering.
Once the PDE that describes the dynamical system is explicitly established, classical numerical solvers such as Finite Difference Methods (FDMs) \cite{LeVeque2007FDM,Thomas2013NPDE-FDM} and Finite Volume Methods (FVMs) \cite{LeVeque2002FVM} which have been well-developed over the past few decades are usually applied for simulation via suitable discretizations on the grids. These methods are widely used because of their reliable theoretical analysis on convergence and stability.

Nevertheless, lots of dynamical systems appearing in reality such as turbulence modeling and weather forecasting are partially known. Turbulent flows are characterized by multiscale spatio-temporal chaos, and there is no strict separation between low-frequency coherent dynamics and turbulent fluctuations. To avoid the unaffordable computational costs, governing equations for filtered or averaged variables are established to study the statistical features, but the influence of the unresolved scales cannot be determined, which still leads to the closure problem \cite{Pope2000Turbulent}. Things become even more difficult in the field of modern numerical weather prediction (NWP) due to limit of computational resources and lack of obervation data. Apart from a dynamical core describing the atmosphere in terms of momentum, mass and enthalphy \cite{Jacobson2005,Ulrich2022,IFSdocDynamics}, a well-designed NWP model needs physical parameterizations for the subgrid-scale processes such as turbulent transport near the surface, cumulus convection, microphysics and radiation \cite{IFSdocPhysPara}. Unfortunately, these subgrid-scale processes are so complicated that there exist a large number of physical features unable to be modeled via explicit formulas.

In the field of deep learning, these kinds of spatio-temporal dynamical systems are often simulated by neural networks.
Some studies make use of either convolutional neural networks (CNNs) \cite{guo2016convolutional, YinhaoZhu2018BayesianDC, YaserAfshar2019PredictionOA, YuehawKhoo2021SolvingPP, PranshuPant2021DeepLF} or graph neural networks (GNNs) \cite{FrancisOgoke2021GraphCN, ZijieLi2022GraphNN, li2022graph} to learn the spatial relationships in a PDE based on the form of mesh grids, and then employ various schemes to evolve over time.
Meanwhile, the neural operator \cite{LuLu2021LearningNO,li2020fourier,guibas2021efficient, tran2021factorized,ShuhaoCao2021ChooseAT} directly learns the solution mapping between two infinite-dimensional function spaces. This success in various PDE prototypes has received significant attention from scientists studying atmospheric modeling \cite{Dueben2018Challenges}, and many network architectures have been adapted for use in atmospheric science for weather prediction \cite{Schultz2021CanDLbeat}. For example, FourCastNet \cite{pathak2022fourcastnet}, Pangu-Weather \cite{bi2022pangu}, and GraphCast \cite{lam2022graphcast} have demonstrated great potential in weather forecasting scenarios. Despite the vast potential of these pure data-driven black-box models, their high accuracy is dependent on excessive training costs and a large amount of training data. Additionally, these models do not take into account physics knowledge such as conservation laws and dynamic equations, raising concerns about interpretability.

Recently, more and more attention has been focused on embedding partial prior knowledge into network architectures. \cite{FilipedeAvilaBelbutePeres2020CombiningDP} establishes a hybrid neural network combining traditional graph convolutional networks with an embedded differentiable fluid dynamics simulator inside the network. Inspired by the idea of Large Eddy Simulation (LES) for turbulence, TF-Net \cite{Wang2020TowardsPIDLforTurbulence} tries to decompose the turbulent flow with respect to different levels of energy spectrum via spatial and temporal filters, while JAX-CFD \cite{Dmitrii2021MLaccCFD} contains learnable interpolation operators for advected and advecting velocity components and a pressure projection to enforce the zero-divergence condition. Such methods are problem-specific and thus difficult to be applied to other tasks. Another more universal kind of approaches try to directly resemble the associated differential operators. For instance, to address inverse problems, PDE-FIND \cite{Samuel2017PDE-FIND} takes the advantage of polynomial interpolation rather than directly ultilizes fixed finite-difference approximations. Alternatively, PDE-Net \cite{long2018pde,long2019pde} followed by DSN \cite{So2021DSN} substitutes the spatial differential operators with moment-constrained convolutions. Each convolutional operator is guaranteed to approximate a fixed differentiation with learnable parameters, which shows stability and high accuracy on long-term prediction tasks.

\paragraph{Our contributions.} In this paper, we propose a hybrid  network architecture ``PDE-Net++'' for partially known spatio-temporal dynamical systems. Specifically, PDE-Net++ incorporates possibly trainable difference operators for all the related derivatives appearing in the known part and a purely data-driven neural network for the unknown part. Motivated by the delicate numerical schemes as well as the dynamic convolution \cite{De2016DynamicFilter,Su2019Pixel,Han2021DynamicFilterSurvey}, we replace the original universal moment-constrained kernel in PDE-Net with a kernel for each grid point that encodes local features, which lead to the newly proposed TFDL and TDDL modules. PDE-Net++ is capable of performing long-period fast and stable simulation with high accuracy once trained with a few number of observation data. The main contributions are summarized as follows:
\begin{itemize}
    \item A new hybrid architecture named PDE-Net++ is proposed, which effectively combines difference operators and black-box models, essentially realizing the explicit encoding of the partially prior knowledge of the underlying PDEs.
    \item Within the PDE-Net++ architecture, two new modules named the TFDL and the TDDL are first given in this paper based on the moment-constrained convolution, which generally show better performances in comparison of the existing well-designed difference schemes and moment-constrained convolutions.
    \item Extensive numerical experiments have been performed to demonstrate the effectiveness of PDE-Net++, which indicate that combining physical priors with neural networks have significantly higher prediction accuracy than the pure black-box models for both interpolation and extrapolation.
\end{itemize}

\section{Methodology}
We focus on solving spatio-temporal PDEs in the general form of
\begin{subequations}\label{eq:def}
    \begin{align}
         & \frac{\partial\bm U}{\partial t} = \mL(\bm x, \bm U), \quad (\bm x,t) \in \Omega \times [0,T], \label{eq:def_a} \\
         & \mathcal{I}(\bm x, \bm U) = 0, \quad \bm x \in \Omega, \label{eq:def_b}                                         \\
         & \mathcal{B}(\bm x, t, \bm U) = 0, \quad (\bm x,t) \in \partial \Omega \times [0,T] \label{eq:def_c}
    \end{align}
\end{subequations}
for $n$-dimensional state variables $\bm U=\bm U(\bm x,t):\Omega\times[0,T]\to\mR^n$. Here, $\mL(\cdot)$ stands for the differential operator describing the underlying dynamics with partial knowledge, which is furthermore decomposed into two parts, namely the known part $\mL_{\textrm{known}}$ and the unknown part $\mL_{\textrm{unknown}}$.
\[\mL(\bm x, \bm U)=\mL_{\text{known}}(\bm x, \bm U)+\mL_{\text{unknown}}(\bm x, \bm U).\]
The operators $\mathcal{I}(\cdot)$ and $\mathcal{B}(\cdot)$ correspond to the initial and the boundary conditions, respectively. Assume that we have an explicit formula
\begin{equation}\label{eq:Lknown-decomp}
    \mL_{\textrm{known}}(\bm x,\bm U)=\bm\Phi(\bm x,\bm U,\nabla\bm U,\nabla^2\bm U,\cdots)
\end{equation}
for the known part, but no prior knowledge is provided for the unknown one. Our task is to develop a surrogate model that recovers the hidden dynamical behavior based on noisy observations of $\bm U$. Once such model is established, it is capable of generating fast simulation starting from new initial conditions with high accuracy.

\subsection{The PDE-Net++ architecture}
We employ the forward Euler integration scheme
\begin{equation}\label{eq:forward-euler-scheme}
    \resizebox{.9\hsize}{!}{$%
            \begin{aligned}
                \bm U_{j+1} & \approx\bm U_j+\mL(\bm x,\bm U_j)\Delta t                                                          \\
                            & =\bm U_j+\mL_{\textrm{known}}(\bm x,\bm U_j)\Delta t+\mL_{\textrm{unknown}}(\bm x,\bm U_j)\Delta t
            \end{aligned}
        $%
    }%
\end{equation}
to discretize Eq. \eqref{eq:def_a}\noeqref{eq:def_b}\noeqref{eq:def_c} along the temporal dimension, where the subscript $j$ stands for the time index with a time interval $\Delta t$. The general idea of PDE-Net++ is to approximate the known part with certain finite difference operators, and to approximate the unknown part with a neural network $\mathcal{F}_{\text{NN}}(\bm x, \bm U_j)$ named as the ``\textit{backbone}''. The updating formula of PDE-Net++ according to Eq. \eqref{eq:forward-euler-scheme} is
\begin{equation}\label{eq:my_forward_euler}
    \begin{aligned}
        \hat{\bm U}_{j+1} & =\bm U_j+\hat{\bm\Phi}(\bm x,\bm U_j)\Delta t+\mathcal{F}_{\mathrm{NN}}(\bm x, \bm U_j)\Delta t,
    \end{aligned}
\end{equation}
where
\[\hat{\bm\Phi}(\bm x,\bm U_j)=\bm\Phi(\bm x,\bm U_j, D(\bm U_j), D^2(\bm U_j),\cdots),\]
and the symbol $\hat{\bm U}$ stands for the model prediction. The derivative terms such as $\nabla\bm U$ and $\nabla^2\bm U$ in $\bm\Phi$ are replaced by the corresponding difference operators $D(\bm U)$ and $D^2(\bm U)$ respectively, which can be implemented via existing stable finite difference schemes or the trainable difference layers introduced later. Besides, the neural network $\mathcal{F}_{\mathrm{NN}}$ is designed to account for the truncation error introduced by the former difference operators \cite{Um2020SolverintheLoop} as well as the remaining unknown part $\mL_{\textrm{unknown}}(\bm x, \bm U_j)$.

\begin{figure*}
    \begin{center}
        \includegraphics[width=1.0\textwidth]{./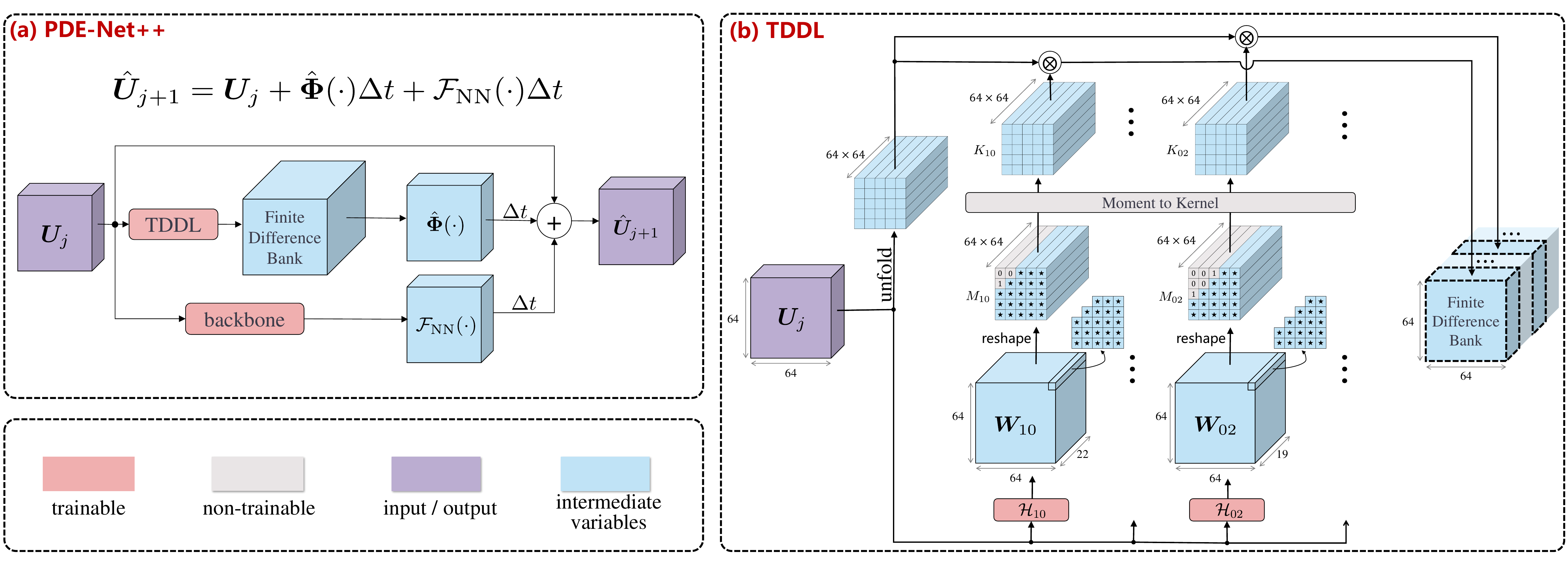}
        \caption{(a) The overall architecture of PDE-Net++ together with the TDDL module. (b) The trainable dynamic difference layer (TDDL). For illustration, the mesh size is set as $64\times64$. Both the kernel size and the moment size are fixed as $5\times5$, and the difference operators corresponding to the derivatives $\partial/\partial x$ and $\partial^2/\partial y^2$ are displayed as an example.}
        \label{fig:method_pdenet_plus}
    \end{center}
\end{figure*}

Fig.\ref{fig:method_pdenet_plus}(a) shows the overall architecture of PDE-Net++ with the difference operators realized by the TDDL module as an example. The input $\bm U_j$ records the state variables (possibly predictions from the previous data) at time step $j$, which is then fed separately to the TDDL module and the backbone. The TDDL module produces approximations of derivatives of $\bm U_j$ and saves them in a finite difference bank. Afterwards, the results of finite differences are collected to compute $\hat{\bm\Phi}(\bm x,\bm U_j)$ as an approximation of $\mL_{\textrm{known}}(\bm x,\bm U_j)$. Meanwhile, a backbone network processing on $\bm U_j$ produces $\mathcal{F}_{\textrm{NN}}(\bm x,\bm U_j)$ to approximate the $\mL_{\textrm{unknown}}$ term. Finally, the respective results of the two paths multiplied by $\Delta t$ give rise to the predictions for the next time step via the ``skip-connection'' introduced in ResNet \cite{he2016deep}. Regarding the choices of the backbone, we test some classic models including U-Net \cite{OlafRonneberger2015UNetCN}, FNO \cite{li2020fourier} together with its variant F-FNO \cite{tran2021factorized}, ConvResNet \cite{he2016deep}, and Galerkin Transformer \cite{ShuhaoCao2021ChooseAT}.

\subsection{Trainable difference layers}\label{sec:difflayers}
In order to introduce our physics prior, the ideas borrowed from FDMs are used to resemble the derivatives in $\mL_{\textrm{known}}$. For any 1D scalar function $f:\mR\to\mR$ and a set of $N_P$ different grid points $\{x_l\}_{l=1}^{N_P}$, the (possibly high-order) derivative at some point $\bar x$ can be approximated via a linear combination
\[\left.\frac{\md^kf}{\md x^k}\right|_{x=\bar x}\approx\sum_{l=1}^{N_P}\alpha_l^{(k)}f(x_l)\]
if the coefficients $\alpha_l^{(k)}$ satisfy a non-homogeneous linear system depending on the order $k$ and the relative distances $(x_l-\bar x)$ \cite{fornberg1988generation}. Similar deductions can be applied to high-dimensional cases. Note that such linear systems have solution sets that can be easily parameterized, which allows us to introduce tunable parameters when implementing the difference operators.
\begin{Def}
    A parameterized map $D_\theta:\mZ^2\times\mR_+^2\to\mZ^2$ is defined as a \textit{difference layer} corresponding to the differential operator $\frac{\partial^{p+q}}{\partial x^p\partial y^q}$ with a parameter space $\Theta$ if for any smooth function $h$ and $\theta\in\Theta$,
    \[D_\theta(G(h);\Delta x,\Delta y)(k,l)\to G\left(\frac{\partial^{p+q}h}{\partial x^p\partial y^q}\right)(k,l)\]
    for each $(k,l)\in\mZ^2$ as $\Delta x,\Delta y\to0$, where the regular grid sampler $G$ is defined as
    \[G(h)(k,l):=h(k\Delta x,l\Delta y).\]
\end{Def}
\begin{Rem}
    Indeed, the schemes coming from FDMs or FVMs can be viewed as difference layers with trivial parameter spaces (containing only one point). Our convention for difference layers excludes such cases to distinguish trainable difference layers from non-trainable ones.
\end{Rem}

Inspired by the systematic connection between spatially differential operators and convolutional kernels established by \cite{Cai2012ImageRestoration, dong2017ImageRestoration} within the image restoration setting, PDE-Net \cite{long2018pde,long2019pde} essentially gives a convolutional version of the difference layer defined above, which can be considered as a simple special case. 
In fact, PDE-Net gives the following proposition. Note that from now on, we omit the dependencies on $\Delta x$ and $\Delta y$ for simplicity.

\begin{Prop}
    It can be shown by proper truncation (Prop. \ref{prop:moment-constraints}) on the Taylor's expansion that the convolution
    \begin{equation}\label{eq:conv-difflayer-concept}
        \begin{aligned}
            D(G(h))(k,l) & =(K\circledast G(h))(k,l)            \\
                         & =\sum_{s,t\in\mZ}K(s,t)G(h)(k+s,l+t)
        \end{aligned}
    \end{equation}
    with a locally-supported convolutional kernel $K$ is a difference layer corresponding to the differential operator $\frac{\partial^{p+q}}{\partial x^p\partial y^q}$ with a convergence order $(r+1)$ if and only if the moment matrix of $K$
    \begin{equation}
        M(K)(u,v):=\sum_{s,t\in\mZ}K(s,t)\frac{s^ut^v}{u!v!}(\Delta x)^u(\Delta y)^v
    \end{equation}
    satisfies the moment constraint
    \begin{equation}\label{eq:moment-constraints}
        \begin{aligned}
            M(K)(u,v) = \delta_{u,p}\delta_{v,q}\,\textrm{ for all }\, u+v\le p+q+r
        \end{aligned}
    \end{equation}
    with the Kronecker delta $\delta$.
\end{Prop}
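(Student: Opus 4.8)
The plan is to reduce everything to a single bookkeeping identity coming from Taylor's theorem, and then read off both implications from it. Fix a grid point $(k,l)\in\mZ^2$, write $\bar{\bm x}:=(k\Delta x,l\Delta y)$, and set $\Delta:=\max(\Delta x,\Delta y)$. Since $K$ is locally supported, say on $\{(s,t):|s|,|t|\le R\}$ with $R$ fixed, each sampled point $((k+s)\Delta x,(l+t)\Delta y)$ lies within $O(\Delta)$ of $\bar{\bm x}$. First I would Taylor-expand $h$ about $\bar{\bm x}$ to total degree $N:=p+q+r$ with remainder, so that $G(h)(k+s,l+t)$ equals the degree-$N$ Taylor polynomial in $(s\Delta x,t\Delta y)$ plus a remainder $\rho_{s,t}$ with $|\rho_{s,t}|\le C\,\Delta^{N+1}\sup|\nabla^{N+1}h|$ uniformly over the finitely many lattice points of $\operatorname{supp}K$. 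Multiplying by $K(s,t)$, summing, and interchanging the two finite sums, the inner sum over $(s,t)$ is exactly a moment, which yields the key identity
\[
D(G(h))(k,l)=\sum_{u+v\le N}M(K)(u,v)\,\frac{\partial^{u+v}h}{\partial x^u\partial y^v}(\bar{\bm x})+\mathcal{E},\qquad\mathcal{E}:=\sum_{s,t}K(s,t)\,\rho_{s,t}.
\]

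For sufficiency, I would assume the moment constraint and note that it collapses the sum to the single surviving term $\partial^{p+q}h/\partial x^p\partial y^q(\bar{\bm x})=G\!\big(\partial^{p+q}h/\partial x^p\partial y^q\big)(k,l)$, so the approximation error is exactly $\mathcal{E}$; then I would bound $|\mathcal{E}|\le C\,\Delta^{N+1}\,\|K\|_1\sup|\nabla^{N+1}h|$ and invoke the truncation estimate of Prop.~\ref{prop:moment-constraints} — which, once $M(K)(p,q)=1$ is imposed on a minimal stencil, forces $\|K\|_1=O(\Delta^{-(p+q)})$ — to conclude $|\mathcal{E}|=O(\Delta^{N+1-(p+q)})=O(\Delta^{r+1})$, i.e.\ $D$ is a difference layer of convergence order $r+1$. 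For necessity, I would feed the identity the monomials $h_{a,b}(x,y):=x^ay^b/(a!\,b!)$, for which the Taylor expansion terminates and $\mathcal{E}=0$, so the identity is \emph{exact}; the difference-layer hypothesis then pins the resulting polynomial in $\Delta$ to $G\!\big(\partial^{p+q}h_{a,b}/\partial x^p\partial y^q\big)(k,l)$ up to $O(\Delta^{r+1})$. Matching coefficients of $\Delta^j$ for $j\le r$ (it suffices to take $\Delta x=\Delta y$ and then vary $(k,l)$) and inducting on the total degree $u+v$ would successively show $M(K)(u,v)=\delta_{u,p}\delta_{v,q}$ for every $u+v\le N$.

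The hard part will be the error estimate in the sufficiency direction rather than the algebra: because the kernel $K$ itself scales with the grid spacing and its entries blow up like $\Delta^{-(p+q)}$, one must track powers of $\Delta$ carefully to see that the order-$(N+1)$ Taylor tail is genuinely $O(\Delta^{r+1})$ and not larger — which is precisely the role of the separate ``proper truncation'' proposition. A secondary technical point is, in the necessity direction, choosing the test monomials and evaluation points $(k,l)$ so that the linear map from the $\Delta$-coefficients back to the moments $M(K)(u,v)$ is invertible; the induction on degree handles this cleanly, but it has to be set up with care.
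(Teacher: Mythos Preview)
Your plan is exactly the paper's: Taylor-expand $h$ about the grid point, swap the two finite sums, and recognize the inner sum over the stencil as the moment entry $M(K)(u,v)$, arriving at the identity
\[
(K\circledast G(h))(k,l)=\sum_{u+v\le N}M(K)(u,v)\,\partial_x^u\partial_y^v h(\bar{\bm x})+\text{(Taylor tail)},
\]
from which both directions are read off. The paper's appendix proof is literally that four-line computation followed by ``the proposition immediately follows''; your write-up is more thorough (correct truncation degree $N=p+q+r$, and an explicit necessity argument via monomials at the origin, which the paper omits entirely).

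One genuine confusion to fix: you invoke ``the truncation estimate of Prop.~\ref{prop:moment-constraints}'' to supply the bound $\|K\|_1=O(\Delta^{-(p+q)})$, but in this paper Prop.~\ref{prop:moment-constraints} \emph{is} the appendix restatement of the very proposition you are proving --- it is not an auxiliary lemma, and it contains no kernel-norm estimate. That reference is circular. The scaling $\|K\|_1\asymp\Delta^{-(p+q)}$ (when it holds) must be argued directly from the single linear equation $M(K)(p,q)=1$ on a fixed finite support. Note also that your ``minimal stencil'' caveat does not cover the general compactly-supported $K$ in the statement: on a larger stencil one can add to $K$ any element of the null space of the constraint map with an arbitrary $\Delta$-dependent coefficient, so $\|K\|_1$ is \emph{not} controlled by the moment constraints alone. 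The paper's own proof silently absorbs $K$ into the $O(\cdot)$ constant and never confronts this; your instinct that the error bound is the delicate step is correct, but the resolution is not to cite Prop.~\ref{prop:moment-constraints} --- it is either to impose a mild uniform-in-$\Delta$ bound on the free moments (which is the implicit convention in the PDE-Net parameterization) or to accept the same level of informality as the paper.
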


Furthermore, if both the kernel size and the moment matrix size are fixed as $(2L+1)\times(2L+1)$, then there is a one-to-one correspondence between the moment matrix and the kernel. In other words, for fixed $(p,q)$ and $r$, any convolutional difference layer can be uniquely determined and parameterized by the free elements $\{M(K)(u,v)\}_{u+v>p+q+r}$ in the moment matrix of its convolutional kernel. Consequently, all convolutional difference layers are of the form
\begin{equation}\label{eq:ConvDiffLayer}
    D_\theta(G(h))(k,l)=\sum_{s,t\in\mZ}\bar K_\theta(s,t)G(h)(k+s,l+t),
\end{equation}
where $\bar K_\theta$ stands for the convolutional kernels parameterized by the free elements of its moment matrix.
\subsection{Trainable flipping difference layers} \label{sec:TFDL}
Nevertheless, most classical numerical schemes do not fix the coefficients across the whole solution domain, which usually rely on the local features of the numerical solution. For instance, upwind schemes originated in \cite{Courant1952Upwind} switch among different sets of coefficients according to the signs of the advection velocities. FVMs usually adapt the flux limiter \cite{Roe1986minmod,VanLeer1974,VanLeer1979} to exchange some accuracy for stability. Besides, ENO and WENO methods \cite{Liu1994WENO,Shu1998ENOandWENO} seek for a (weighted) average of interpolation polynomials based on a heuristic smoothness measure for better reconstructions. One may refer to Appendix \ref{sec:schemes-with-local-features} for more details. These methods all result in different coefficients for the stencils at different grid points. Therefore, it is unwise to expect a universal finite difference operator that can achieve a good balance between accuracy and stability for each grid point simultaneously.

We focus on those difference layers where $D_\theta(G(h))(k,l)$ is defined as
\begin{equation}\label{eq:FlexDiffLayers}
    \sum_{s,t\in\mZ}K_\theta(s,t;\bm U,k,l)G(h)(k+s,l+t),
\end{equation}
where $K_\theta$ may depend on not only $h$ itself but also all other features associated with the differential equations. Such a formula allows $K_\theta$ to choose different coefficients according to the locations and the local features of $h$.

To mimic the upwind behavior, we propose the \textit{Trainable Flipping Difference Layer} (TFDL) for the first-order derivatives to switch between two sets of parameters for $K_\theta$ according to the signs of the coefficients before the derivatives. For instance, suppose that the known part $\mL_{\textrm{known}}$ contains the term ``$v\frac{\partial u}{\partial x}$'', the TFDL for the derivative $\frac{\partial u}{\partial x}$ is defined as
\begin{equation}\label{eq:FlippingKernels}
    \resizebox{.88\hsize}{!}{$%
            K_\theta(s,t;\bm U,k,l)=
            \begin{cases}
                \bar K_\theta(s,t),   & G(v)(k,l)>0, \\
                -\bar K_\theta(-s,t), & G(v)(k,l)<0.
            \end{cases}
        $%
    }%
\end{equation}
As for the derivative $\frac{\partial}{\partial y}$, the rule follows in a similar way, but the sign of $t$ rather than that of $s$ is flipped. It can be checked with little difficulty (Prop. \ref{prop:flipped-kernels}) that the TFDL (\refeq{eq:FlexDiffLayers}, \refeq{eq:FlippingKernels}) shares the same corresponding derivative and the same convergence order with the convolutional difference layer (\refeq{eq:ConvDiffLayer}).

\subsection{Trainable dynamic difference layers}\label{sec:TDDL}
Sometimes the dynamics is too complicated to be resolved by the convolutional difference layer or the TFDL. Motivated by the dynamic convolution \cite{De2016DynamicFilter,Su2019Pixel,Han2021DynamicFilterSurvey} as well as the data-driven discretizations \cite{Yohai2019Learningdata-drivenDiscretizations} in 1D cases, we employ the \textit{Trainable Dynamic Difference Layer} (TDDL) to generate localized $K_\theta$. In the TDDL, the parameter $\theta$ varies among grid points to capture local features, which is realized via
\begin{equation}
    K_\theta(s,t;\bm U,k,l)=\bar K_{\mathcal{H}(G(\bm U);k,l)}(s,t),
\end{equation}
where $\mathcal{H}$ is a hypernetwork \cite{DavidHaHypernetwork}. In this paper, we choose CNNs containing 3 convolutional layers with ReLU for the implementation of $\mathcal{H}$. A schematic diagram for the module is displayed in Fig.\ref{fig:method_pdenet_plus}(b). Suppose that the TDDL module is fed with a 2D field $\bm U_j=(V_1,\cdots,V_c)\in\mR^{c\times h\times w}$ with $c$ channels which we need to take derivatives on. We take the $(p,q)$ derivative of the first feature $V_1$ with a truncation order $(r+1)$ as an example. First, a hypernetwork $\mathcal{H}_{pq}$ receiving $\bm U_j$ gives
\[\bm W_{pq}=\mathcal{H}_{pq}(\bm U_j)\in\mR^{m\times h\times w},\]
where the number of output channels $m$ equals the dimension of parameter space $\Theta$. The output features $\bm W_{pq}(\cdot,k,l)\in\mR^m$ for each grid point $(k,l)$ are then reshaped into a local moment matrix $M_{pq}(\cdot,\cdot;k,l)\in\mR^{(2L+1)\times(2L+1)}$ with the upper-left constants initialized before the training stage. Afterwards, the local moment matrix for each grid point is converted into the corresponding local kernel $K_{pq}(\cdot,\cdot;k,l)\in\mR^{(2L+1)\times(2L+1)}$, followed by a point-wise convolutional operation with the feature $V_1$
\[D_\theta(G(V_1))(k,l)=\sum_{s,t=-L}^LK_{pq}(s,t;k,l)V_1(k+s,l+t)\]
for each grid point $(k,l)$. Other combinations of the derivatives and the features can be calculated in this way simultaneously, and all the results are then gathered and sent to the finite difference bank.

\section{Numerical experiments}\label{sec:experiments}
We select three different types of PDEs, namely, the viscous Burgers' equation, the FitzHugh-Nagumo reaction-diffusion equation as well as the Navier-Stokes equation to test our proposed PDE-Net++ architecture. Comprehensive comparisons between the performances of PDE-Net++ and recent well-developed deep-learning models are illustrated and discussed.
Additional experimental settings and some extended experimental results are included in the Appendix \ref{sec:supplementary-for-experiments}.

\subsection{Training settings}
\paragraph{Data generation}
For each PDE setting, a high-resolution numerical solver with full knowledge of the operator $\mL$ in Eq. (\ref{eq:def_a}) is employed to generate the ground truth with a small time step $\delta t$ on a finer $256\times256$ grid, which is then downsampled to a coarser $64\times64$ grid with a much larger time step $\Delta t$ constituting both the training datasets and the testing datasets. The number of trajectories and time steps are denoted by $N$, $N'$ and $M$, $M'$ for the training and the testing datasets, respectively. The initial conditions of the trajectories in the datasets are sampled with the same distributions, but the simulation time in the testing dataset is longer than those in the training dataset in order to analysis the extrapolation performances of the models. Additionally, the training dataset is mixed with noise since the observation data are usually noisy in the real world. For notational convenience, the ground truth, the noisy data, and the predictions are denoted by $\bm U$, $\tilde{\bm U}$, and $\hat{\bm U}$, respectively, where
\[\tilde{\bm U}=\bm U+0.001\hat\sigma\varepsilon.\]
$\hat\sigma$ represents the standard deviation of $\bm U$ for the time step, and $\varepsilon$ is a field of white noise sampled from the standard Gaussian distribution.
\paragraph{Configurations of PDE-Net++}
The difference operators in our PDE-Net++ architecture are implemented in four different ways to approximate the associated derivatives. (I) \textbf{FDM}, the numerical difference schemes same as what we use for data generation (without any learnable parameters); (II) \textbf{Moment}, the convolutional difference layers with moment-constrained kernels proposed in PDE-Net \cite{long2018pde}; (III) \textbf{TFDL} and (IV) \textbf{TDDL} first proposed in this paper. As for the backbones $\mathcal{F}_{\textrm{NN}}$ for the PDE-Net++ models, we choose (i) \textbf{U-Net} \cite{OlafRonneberger2015UNetCN} coming from the field of computer vision; (ii) \textbf{ConvResNet} used in \cite{liu2022predicting}; (iii) \textbf{FNO} \cite{li2020fourier} together with its variant (iv) \textbf{F-FNO} \cite{tran2021factorized}, and (v) \textbf{Galerkin Tansformer} \cite{ShuhaoCao2021ChooseAT} widely used in operator learning tasks.

\paragraph{Black-Box models}
In contrast, we substitute the dynamical term $\mL(\bm x,\bm U_j)$ in Eq. (\refeq{eq:forward-euler-scheme}) as a whole with a neural network implemented by a certain type of backbone mentioned above directly, which is exactly the same idea as those in \cite{TobiasPfaff2020LearningMS} and thus considered as our baselines.

\paragraph{Metrics}
The measurements of the model performances mainly depend on the \textit{average relative} $L_2$ \textit{error} (abbreviated as $L_2$ \textit{error}), which is calculated with the testing dataset $\left\{\tilde{\bm U}_j^{(i)}\right\}_{j=0,i=1}^{M',N'}$ as
\begin{equation}
    \begin{aligned}
        L_2\ \textit{error} = \frac{1}{N'M'}\sum_{i=1}^{N'}\sum_{j=1}^{M'}R\left(\hat{\bm U}_{j}^{(i)},\bm U_{j}^{(i)}\right).
    \end{aligned}
\end{equation}
Here, the function $R(\cdot,\cdot)$ measuring the relative $L_2$ distance is defined as
\[R(\bm X,\bm Y):=\frac{\|\bm X-\bm Y\|_2}{\|\bm Y\|_2}.\]
Meanwhile, since some models may face the instability problem, we use the \textit{Success Rate} (abbreviated as \textit{SR})
\begin{equation}
    \begin{aligned}
        \textit{SR} = \frac{N'_{\textrm{unstable}}}{N'}\times100\%
    \end{aligned}
\end{equation}
as another measurement for prediction stability, where $N'_\textrm{unstable}$ is the number of failed simulations (with $L_2$ \textit{error}s exceeding $1.0$) during inference.

\begin{table*}
\centering
\caption{Comparison of different methods under different backbones for the three types of PDEs.}
\label{tb:different_method}
\renewcommand{\arraystretch}{1.2} 
{\small
\setlength{\tabcolsep}{1.5mm}{ 
    \begin{tabular}{c | c | c | c  c | c  c | c  c }
        \hline
        \multirow{2}{*}{\textbf{Backbone}} & \multirow{2}{*}{\textbf{Method}} & \multirow{2}{*}{\textbf{Derivatives}} & \multicolumn{2}{c|}{\textbf{Burgers}} & \multicolumn{2}{c|}{\textbf{FN}} & \multicolumn{2}{c}{\textbf{NS}} \\
        \cline{4-9}
        & & & \textbf{\textit{SR}} & \boldmath{$L_2$} \textbf{\textit{error}} & \textbf{\textit{SR}} & \boldmath{$L_2$} \textbf{\textit{error}} & \textbf{\textit{SR}} & \boldmath{$L_2$} \textbf{\textit{error}} \\
        \hline
        \multirow{5}{*}{\makecell[c]{U-Net \\ \cite{OlafRonneberger2015UNetCN}}} & Black-Box & - & 100\% & 9.718e-2 & 100\% & 2.103e-2 & 100\% & 5.276e-3\\
        \cline{2-9}
        & \multirow{4}{*}{PDE-Net++} & FDM & 96\% & 7.624e-3 & 100\% & 5.125e-4 & 98\% & 8.029e-3 \\
        & & Moment & 100\% & 4.210e-3 & 100\% & 3.373e-4 & 100\% & 6.865e-3 \\
        & & TFDL & 100\% & 4.970e-3 & - & - & 100\% & 7.153e-3 \\
        & & TDDL &  100\% & 3.114e-3 & 100\% & 2.080e-4 & 100\% & 2.748e-3 \\
        \hline
        \multirow{6}{*}{\makecell[c]{ConvResNet \\ \cite{liu2022predicting}}} & Black-Box & - & 100\% & 1.050e-1 & 100\% & 1.852e-2 & 100\% & 1.114e-1 \\
        \cline{2-9}
        & \multirow{4}{*}{PDE-Net++} & FDM & 96\% & 1.204e-2 & 100\% & 1.893e-4 & 61\% & 1.915e-3 \\
        & & Moment & 97\% & 6.151e-3 & 100\% & 1.232e-4 & 100\% & 1.819e-3 \\
        & & TFDL & 100\% & 7.767e-3 & - & - & 100\% & 1.936e-3 \\
        & & TDDL & 100\% & 4.481e-3 & 100\% & 1.023e-4 & 100\% & 1.951e-3\\
        \hline
        \multirow{5}{*}{\makecell[c]{FNO \\ \cite{li2020fourier}}} & Black-Box & - & 100\% & 1.162e-2 & 100\% & 2.586e-4 & 100\% & 3.204e-3\\
        \cline{2-9}
        & \multirow{4}{*}{PDE-Net++} & FDM & 98\% & 1.036e-2 & 100\% & 8.200e-5 & 36\% & 4.710e-3 \\
        & & Moment & 99\% & 4.345e-3 & 100\% & 6.475e-5 & 100\% & 7.471e-4 \\
        & & TFDL & 100\% & 4.664e-3 & - & - & 100\% & 7.971e-4 \\
        & & TDDL & 100\% & 3.034e-3 & 100\% & 7.118e-5 & 100\% & 9.081e-4 \\
        \hline
        \multirow{5}{*}{\makecell[c]{F-FNO \\ \cite{tran2021factorized}}} & Black-Box & - & 100\% & 2.064e-2 & 100\% & 5.489e-3 & 100\% & 2.721e-3 \\
        \cline{2-9}
        & \multirow{4}{*}{PDE-Net++} & FDM & 96\% & 9.545e-3 & 100\% & 6.305e-5 & 23\% & 3.331e-3 \\
        & & Moment & 100\% & 4.529e-3 & 100\% & 6.357e-5 & 100\% & 1.265e-3 \\
        & & TFDL & 100\% & 4.456e-3 & - & - & 100\% & 1.291e-3 \\
        & & TDDL & 100\% & 2.920e-3 & 100\% & 6.230e-5 & 100\% & 1.333e-3 \\
        \hline
        \multirow{5}{*}{\makecell[c]{Galerkin \\ Transformer \\ \cite{ShuhaoCao2021ChooseAT}}} & Black-Box & - & 100\% & 8.919e-2 & 100\% & 1.708e-2 & 100\% & 2.821e-2 \\
        \cline{2-9}
        & \multirow{4}{*}{PDE-Net++} & FDM & 96\% & 9.316e-3 & 100\% & 1.133e-3 & 94\% & 1.802e-3 \\
        & & Moment & 100\% & 5.473e-3 & 100\% & 4.101e-4 & 99\% & 7.173e-3 \\
        & & TFDL & 100\% & 5.307e-3 & - & - & 100\% & 8.872e-3 \\
        & & TDDL & 100\% & 5.755e-3 & 100\% & 1.249e-3 & 100\% & 1.041e-2 \\
        \hline
        \end{tabular}
}
}
\end{table*}

\begin{figure*}
    \centering
    \subfigure[Burgers' equation]{
        \label{fig:burgers_l2_error_with_timestep}
        \includegraphics[width=0.3\textwidth]{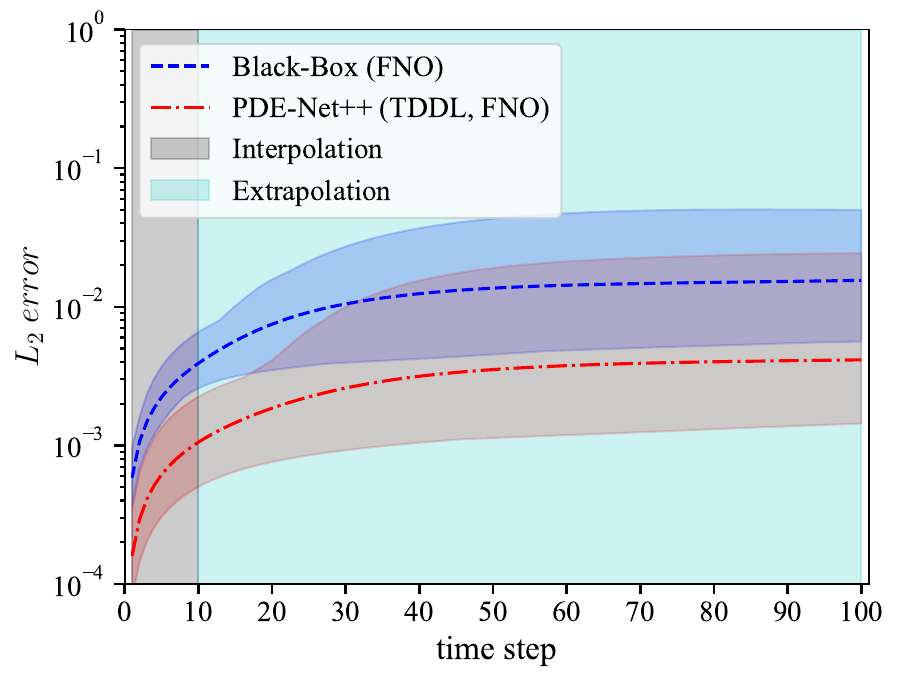}
    }
    \subfigure[FN equation]{
        \label{fig:fn_l2_error_with_timestep}
        \includegraphics[width=0.3\textwidth]{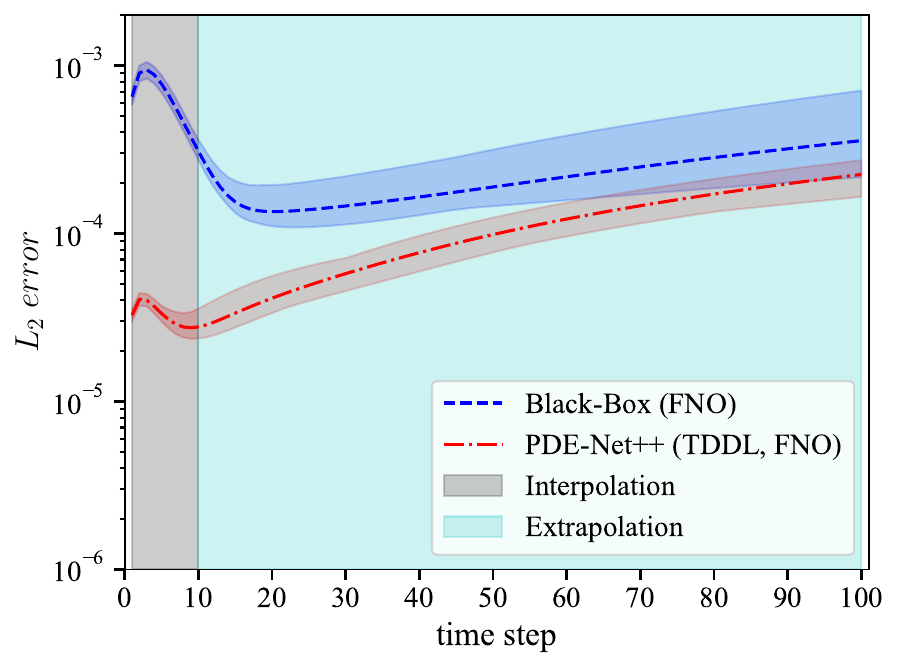}
    }
    \subfigure[NS equation]{
        \label{fig:ns_l2_error_with_timestep}
        \includegraphics[width=0.3\textwidth]{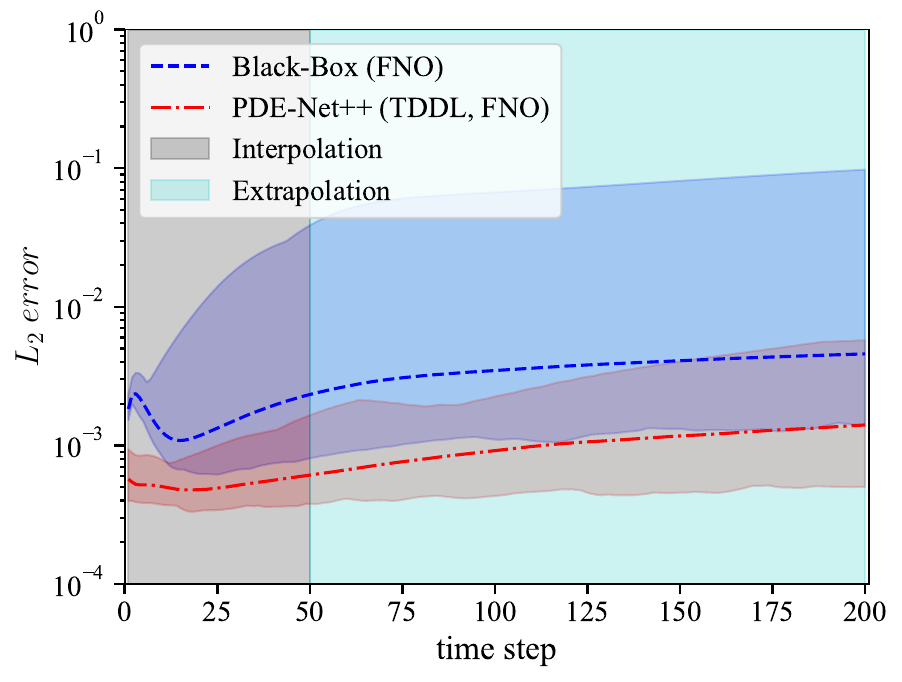}
    }
    \caption{
        $L_2$ \textit{error}s at each time step. The shaded area shows the maximum and minimum $L_2$ \textit{error}s corresponding to all testing samples, and the dashed lines represent the mean.
    }
\end{figure*}

\subsection{Burgers' equation}\label{sec:burgers}
First of all, we study the 2-D viscous Burgers' equation, which is one of the most fundamental non-linear PDEs describing spatio-temporal dynamics. The spatial solution domain is fixed as $\Omega = [0, 2\pi]^2$ with a periodic boundary condition, and we solve for $\bm U=(u(x,y,t),v(x,y,t))^\trans$
\begin{equation}\label{eq:burgers}
    \begin{aligned}
        \frac{\partial\bm U}{\partial t} & = -\bm U \cdot \nabla\bm U + \nu \Delta\bm U +\bm f(x,y,\bm U), \\
        \bm U|_{t=0}                     & = \bm U_0(x,y)
    \end{aligned}
\end{equation}
where the viscosity $\nu = 0.05$. Suppose that both the convection and the diffusion terms are known, but no further information about the forcing term $\bm f$ is provided. The forcing term is set as
\[\bm f(x,y,\bm U) = (\sin(v) \cos(5x+5y), \sin(u) \cos(5x-5y))^\trans\]
when generating the training and testing datasets. The time step $\Delta t = 16\delta t = 0.01$, and the data sizes are set as $N=1000$, $N'=100$ and $M=10$, $M'=100$.

Table \ref{tb:different_method} shows that the PDE-Net++ architecture outperforms the black-box methods under the same backbones, which indicates that the explicit encoding of the known terms in the PDE helps improve the prediction accuracy significantly. Meanwhile, as for the implementations of the difference operators of PDE-Net++, we find that the non-trainable difference operators (FDM) reduce the prediction errors to some extent, but they are faced with the instability issue. We need to emphasize that the trainable difference layers, especially the proposed TFDL and TDDL modules are capable of resolving the instability brought by the difference schemes.
In addition, as illustrated in Figure \ref{fig:burgers_l2_error_with_timestep}, PDE-Net++ produces stable predictions for the $M'=100$ time steps with relatively high accuracy despite the fact that it is only trained with the first $M=10$ steps.

The learned dynamics are shown in Figure \ref{fig:burgers_result}, including the approximated known part $\hat{\bm\Phi}$ produced by the difference operators and the outputs of the FNO backbone designed to account for the unknown part $\bm f$. The predictions of $\bm U$ are nearly indistinguishable from the reference, but it can be seen from the recovered unknown part $\bm f$ that the non-trainable difference operator (FDM) is unable to express the derivatives with high accuracy where the known part $\bm\Phi$ changes dramatically. The poor performances of the non-trainable difference operator (FDM) should result from the failure of the CFL conditions as our time step $\Delta t$ is quite large. Meanwhile, the shift from the RK method used for data generation to the simple Euler forward integration may lead to instability. On the contrary, the trainable difference operator (TDDL) does not suffer from such issue, which implies the flexibility of the trainable difference operators help search for a better collection of coefficients for the specific time step combined with the integration method.

\begin{figure}
    \centering
    \includegraphics[width=0.95\columnwidth]{./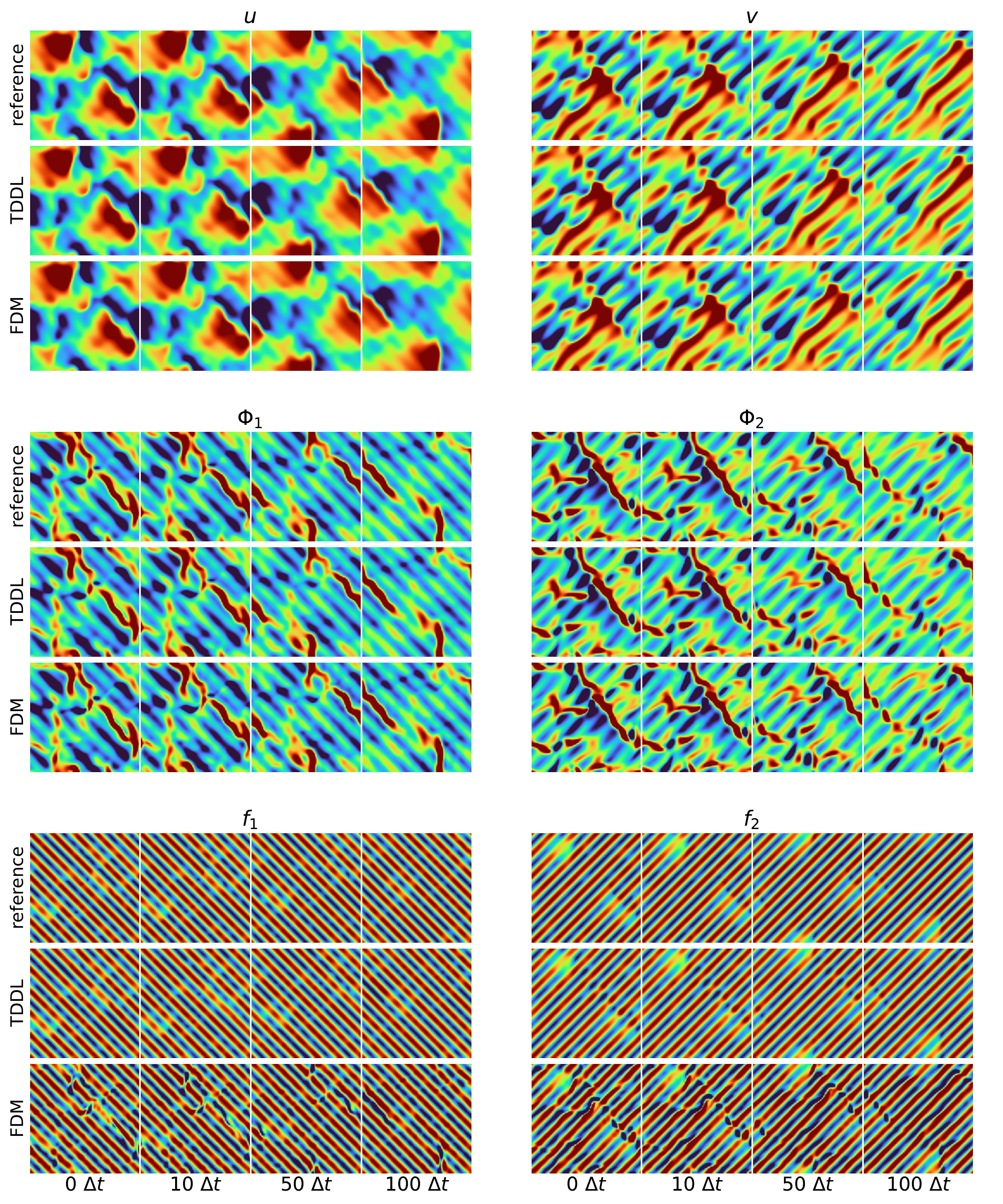}
    \caption{\textbf{Burgers' equation:} Comparison of different choices of the difference operators for the PDE-Net++ architecture with the FNO backbone. The ground truth is used for reference. $\bm U=(u,v)^\trans$ is the solution. $\bm \Phi=(\Phi_1,\Phi_2)^\trans$ and $\bm f=(f_1,f_2)^\trans$ represents the learned known part and the remaining unknown part, respectively.
    }
    \label{fig:burgers_result}
\end{figure}

\begin{figure}
    \begin{center}
        \centerline{\includegraphics[width=0.9\columnwidth]{./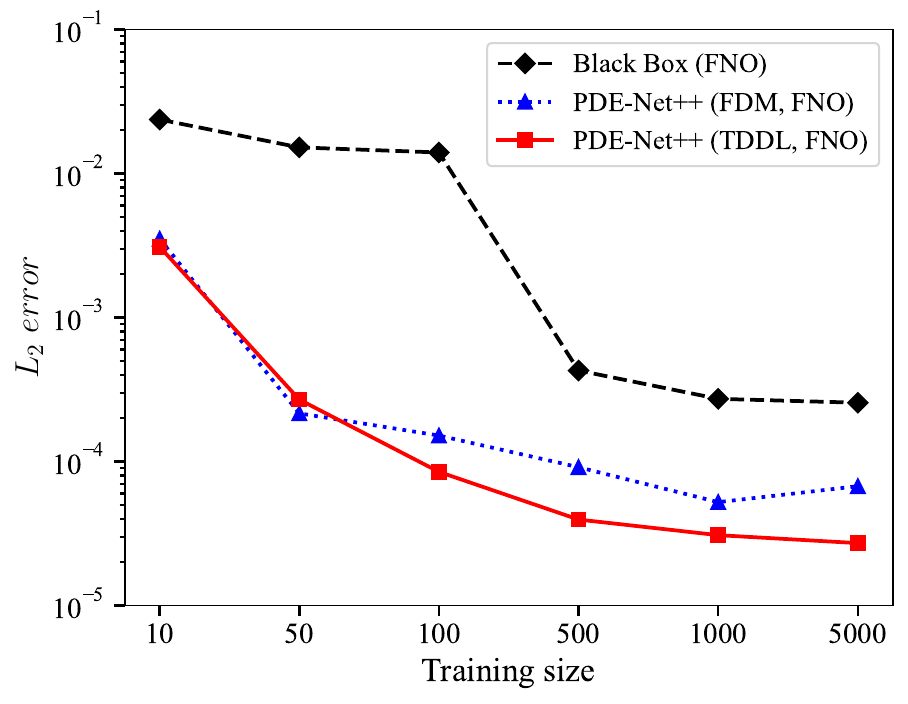}}
        \caption{\textbf{FN equation:} Ablation for the training sizes.}
        \label{fig:fn_ablation_train_size}
    \end{center}
\end{figure}

\subsection{FitzHugh-Nagumo equation}\label{sec:fn}
For the second case, we consider the 2-D FitzHugh-Nagumo (FN) model as a typical example of the reaction-diffusion dynamics. The FN equation is defined as
\begin{equation}\label{eq:fn_rd}
    \begin{aligned}
        \frac{\partial\bm U}{\partial t} & = \gamma \Delta\bm U +\bm R(\bm U), \\
        \bm U|_{t=0}                     & =\bm U_0(x,y)
    \end{aligned}
\end{equation}
with $\bm U=(u(x,y,t),v(x,y,t))^\trans$, where the diffusion coefficient $\gamma=1.0$. Here, the reaction term
\[\bm R(\bm U)=(u-u^3-v+\alpha,\beta(u-v))^\trans,\,\alpha=0.01,\,\beta=0.25\]
is assumed as the unknown part when we establish the PDE-Net++ models. The domain $\Omega$ is set as $[0,6.4]^2$ with the periodic boundary conditions.

To generate our datasets, the data sizes remain the same as those of Burgers' equation, but we modify the time steps to $\Delta t=200\delta t=0.002$ for the purpose of enabling the simulated trajectories to capture more details of the dynamics.

The comparison of different methods is shown in Table \ref{tb:different_method}. Note that the TFDL module is excluded in that it is specially defined for the first-order derivatives. The $L_2$ \textit{error}s still show the superiority of our PDE-Net++ architectures over the black-box models, and the PDE-Net++ architectures equipped with trainable difference operators (Moment, TDDL) generally predict more accurately than those equipped with non-trainable ones (FDM). No failed cases are found in these experiments, and our PDE-Net++ architecture with the proposed TDDL module and the FNO backbone achieves the minimum prediction error.

Figure \ref{fig:fn_ablation_train_size} compares the performances of the black-box method and the PDE-Net architectures with FDM or TDDL as the difference operators when the sizes of training dataset vary. It can be concluded that the PDE-Net++ architecture is able to learn the hidden dynamics $\mL_{\textrm{unknown}}$ from merely 100 training trajectories, while the black-box model is not well trained unless the training size is increased to 1000.

\subsection{Navier-Stokes equation}\label{sec:ns}
Finally, we test our models with the 2-D Navier-Stokes (NS) equation as a much harder case. The equation writes in vorticity form
\begin{equation}\label{eq:ns}
    \begin{aligned}
        \frac{\partial w}{\partial t} & = -\bm U \cdot \nabla w + \nu \Delta w + f(x,y), \\
        \nabla \cdot \bm U            & = 0,                                             \\
        w|_{t=0}                      & = w_0(x,y).
    \end{aligned}
\end{equation}
with the periodic boundary condition for the spatial solution domain $\Omega=[0,1]^2$. $\bm U=(u(x,y,t),v(x,y,t))^\trans$ is the velocity field, and $w=\nabla\times\bm U=\frac{\partial v}{\partial x}-\frac{\partial u}{\partial y}$ is the vorticity we need to solve. The viscosity coefficient $\nu$ is set as 0.001. For a smaller viscosity coefficient when the dynamics becomes more chaotic, we explore the performances of the models in the Appendix \ref{appendix:ns}.

The initial vorticity $w_0(x,y)$ is randomly sampled from $\mathcal{N}(0, 25 (-\Delta + 25 I)^{-3})$ as a Gaussian random field \cite{YangLiu2019AdvancesIG} and then z-score normalized. The forcing function $f(x,y)$ is generated in the same way, but we have to stress that all the trajectories starting from different initial conditions in the datasets share the same $f(x,y)$. For each time step, PDE-Net++ recovers the velocity field $\bm U=(u,v)=\left(-\frac{\partial\psi}{\partial y},\frac{\partial\psi}{\partial x}\right)$ after obtaining the stream function $\psi=\Delta^{-1}w$ via (discrete) Fourier transform on $w$. Then the gradient $\nabla w$ and the diffusion $\Delta w$ are calculated the same way as before. The time step $\Delta t=500\delta t=0.025$, and the data sizes are $N=1000$, $N'=100$ and $M=50$, $M'=200$.

As depicted in Table \ref{tb:different_method}, with the same backbone, PDE-Net++ attains a higher accuracy compared with the black-box models. Meanwhile, we find that PDE-Net++ with non-trainable difference operators (FDM) can hardly keep stable for the NS equation case, which may attribute to the fact that the dynamic system is much more complicated than the previous two cases. Although the TFDL and the TDDL modules predict almost as accurately as the moment-constrained convolution (Moment), the success rate reveals the consistent stability of our proposed modules.

\section{Discussion and future work}
PDE-Net++ is a newly proposed hybrid neural network architecture that incorporates physics prior with black-box models.
Comparisons of accuracy and stability have exhibited that even partly embedding the governing equations into the network can significantly improve the accuracy of predictions. Besides, inspired by the trainable difference operators in the existing works, two additional modules named TFDL and TDDL are firstly introduced. All the experiment results have shown that these two modules are able to stablize the roll-out process effectively.

Current architecture can be applied to any other prediction tasks for spatio-temporal dynamics as long as partial knowledge of the underlying PDEs is provided, although we have to emphasize that necessary modifications for the parameterizations of the feasible schemes are needed. For instance, in deep-learning based weather prediction tasks, till now, only black-box models such as U-Net \cite{Weyn2021modifiedDLWP}, ResNet \cite{Rasp2021ResNet,Clare2021ResNetProbability}, GNN \cite{keisler2022GNNs,lam2022graphcast}, and Transformer \cite{Guibas2021AFNO,pathak2022fourcastnet,Liu2021SwinTransformer,bi2022pangu} have been tested. Since the dynamical core has been studied for decades by experts, with the aid of PDE-Net++, existing state-of-the-art models are believed to be able to achieve better performance.

\bibliographystyle{icml2023}
\bibliography{references}

\begin{thebibliography}{59}
\providecommand{\natexlab}[1]{#1}
\providecommand{\url}[1]{\texttt{#1}}
\expandafter\ifx\csname urlstyle\endcsname\relax
  \providecommand{\doi}[1]{doi: #1}\else
  \providecommand{\doi}{doi: \begingroup \urlstyle{rm}\Url}\fi

\bibitem[Achatz(2022)]{Ulrich2022}
Achatz, U.
\newblock \emph{Atmospheric Dynamics}.
\newblock Springer Spektrum Berlin, Heidelberg, 2022.
\newblock ISBN 978-3-662-63940-5.

\bibitem[Afshar et~al.(2019)Afshar, Bhatnagar, Pan, Duraisamy, and
  Kaushik]{YaserAfshar2019PredictionOA}
Afshar, Y., Bhatnagar, S., Pan, S., Duraisamy, K., and Kaushik, S.
\newblock {P}rediction of {A}erodynamic {F}low {F}ields {U}sing {C}onvolutional
  {N}eural {N}etworks.
\newblock \emph{Computational Mechanics}, 2019.

\bibitem[Bar-Sinai et~al.(2019)Bar-Sinai, Hoyer, Hickey, and
  Brenner]{Yohai2019Learningdata-drivenDiscretizations}
Bar-Sinai, Y., Hoyer, S., Hickey, J., and Brenner, M.~P.
\newblock Learning data-driven discretizations for partial differential
  equations.
\newblock \emph{Proceedings of the National Academy of Sciences}, 116\penalty0
  (31):\penalty0 15344--15349, 2019.

\bibitem[Bi et~al.(2022)Bi, Xie, Zhang, Chen, Gu, and Tian]{bi2022pangu}
Bi, K., Xie, L., Zhang, H., Chen, X., Gu, X., and Tian, Q.
\newblock {P}angu-{W}eather: {A} 3{D} {H}igh-{R}esolution {M}odel for {F}ast
  and {A}ccurate {G}lobal {W}eather {F}orecast.
\newblock \emph{arXiv preprint arXiv:2211.02556}, 2022.

\bibitem[Cai et~al.(2012)Cai, Dong, Osher, and Shen]{Cai2012ImageRestoration}
Cai, J.-F., Dong, B., Osher, S., and Shen, Z.
\newblock {I}mage {R}estoration: {T}otal {V}ariation, {W}avelet {F}rames, and
  {B}eyond.
\newblock \emph{Journal of the American Mathematical Society}, 25\penalty0
  (4):\penalty0 1033--1089, 2012.
\newblock ISSN 08940347, 10886834.

\bibitem[Cao(2021)]{ShuhaoCao2021ChooseAT}
Cao, S.
\newblock {C}hoose a {T}ransformer: {F}ourier or {G}alerkin.
\newblock \emph{Advances in Neural Information Processing Systems},
  34:\penalty0 24924--24940, 2021.

\bibitem[Clare et~al.(2021)Clare, Jamil, and
  Morcrette]{Clare2021ResNetProbability}
Clare, M.~C., Jamil, O., and Morcrette, C.~J.
\newblock Combining distribution-based neural networks to predict weather
  forecast probabilities.
\newblock \emph{Quarterly Journal of the Royal Meteorological Society},
  147\penalty0 (741):\penalty0 4337--4357, 2021.

\bibitem[Courant et~al.(1952)Courant, Isaacson, and Rees]{Courant1952Upwind}
Courant, R., Isaacson, E., and Rees, M.
\newblock On the solution of nonlinear hyperbolic differential equations by
  finite differences.
\newblock \emph{Communications on Pure and Applied Mathematics}, 5\penalty0
  (3):\penalty0 243--255, 1952.

\bibitem[de~Avila Belbute-Peres et~al.(2020)de~Avila Belbute-Peres, Economon,
  and Kolter]{FilipedeAvilaBelbutePeres2020CombiningDP}
de~Avila Belbute-Peres, F., Economon, T.~D., and Kolter, J.~Z.
\newblock {C}ombining {D}ifferentiable {PDE} {S}olvers and {G}raph {N}eural
  {N}etworks for {F}luid {F}low {P}rediction.
\newblock In \emph{Proceedings of the 37th International Conference on Machine
  Learning}, 2020.

\bibitem[De~Brabandere et~al.(2016)De~Brabandere, Jia, Tuytelaars, and
  Van~Gool]{De2016DynamicFilter}
De~Brabandere, B., Jia, X., Tuytelaars, T., and Van~Gool, L.
\newblock {D}ynamic {F}ilter {N}etworks.
\newblock In \emph{Proceedings of the 30th International Conference on Neural
  Information Processing Systems}, pp.\  667--675. Curran Associates Inc.,
  2016.
\newblock ISBN 9781510838819.

\bibitem[Dong et~al.(2017)Dong, Jiang, and Shen]{dong2017ImageRestoration}
Dong, B., Jiang, Q., and Shen, Z.
\newblock {I}mage {R}estoration: {W}avelet {F}rame {S}hrinkage, {N}onlinear
  {E}volution {PDE}s, and {B}eyond.
\newblock \emph{Multiscale Modeling \& Simulation}, 15\penalty0 (1):\penalty0
  606--660, 2017.

\bibitem[Dueben \& Bauer(2018)Dueben and Bauer]{Dueben2018Challenges}
Dueben, P.~D. and Bauer, P.
\newblock Challenges and design choices for global weather and climate models
  based on machine learning.
\newblock \emph{Geoscientific Model Development}, 11\penalty0 (10):\penalty0
  3999--4009, 2018.

\bibitem[ECMWF(2021{\natexlab{a}})]{IFSdocDynamics}
ECMWF.
\newblock \emph{IFS Documentation CY47R3 - Part III Dynamics and numerical
  procedures}, chapter~3.
\newblock IFS Documentation. ECMWF, 2021{\natexlab{a}}.

\bibitem[ECMWF(2021{\natexlab{b}})]{IFSdocPhysPara}
ECMWF.
\newblock \emph{IFS Documentation CY47R3 - Part IV Physical processes},
  chapter~4.
\newblock IFS Documentation. ECMWF, 2021{\natexlab{b}}.

\bibitem[Fornberg(1988)]{fornberg1988generation}
Fornberg, B.
\newblock Generation of finite difference formulas on arbitrarily spaced grids.
\newblock \emph{Mathematics of computation}, 51\penalty0 (184):\penalty0
  699--706, 1988.

\bibitem[Guibas et~al.(2021{\natexlab{a}})Guibas, Mardani, Li, Tao, Anandkumar,
  and Catanzaro]{Guibas2021AFNO}
Guibas, J., Mardani, M., Li, Z., Tao, A., Anandkumar, A., and Catanzaro, B.
\newblock {A}daptive {F}ourier {N}eural {O}perators: {E}fficient {T}oken
  {M}ixers for {T}ransformers.
\newblock \emph{arXiv preprint arXiv:2111.13587}, 2021{\natexlab{a}}.

\bibitem[Guibas et~al.(2021{\natexlab{b}})Guibas, Mardani, Li, Tao, Anandkumar,
  and Catanzaro]{guibas2021efficient}
Guibas, J., Mardani, M., Li, Z., Tao, A., Anandkumar, A., and Catanzaro, B.
\newblock Efficient {T}oken {M}ixing for {T}ransformers via {A}daptive
  {F}ourier {N}eural {O}perators.
\newblock In \emph{International Conference on Learning Representations},
  2021{\natexlab{b}}.

\bibitem[Guo et~al.(2016)Guo, Li, and Iorio]{guo2016convolutional}
Guo, X., Li, W., and Iorio, F.
\newblock Convolutional neural networks for steady flow approximation.
\newblock In \emph{Proceedings of the 22nd ACM SIGKDD international conference
  on knowledge discovery and data mining}, pp.\  481--490, 2016.

\bibitem[Ha et~al.(2016)Ha, Dai, and Le]{DavidHaHypernetwork}
Ha, D., Dai, A., and Le, Q.~V.
\newblock {H}yper{N}etworks.
\newblock \emph{arXiv preprint arXiv:1609.09106}, 2016.

\bibitem[Han et~al.(2022)Han, Huang, Song, Yang, Wang, and
  Wang]{Han2021DynamicFilterSurvey}
Han, Y., Huang, G., Song, S., Yang, L., Wang, H., and Wang, Y.
\newblock {D}ynamic {N}eural {N}etworks: {A} {S}urvey.
\newblock \emph{IEEE Transactions on Pattern Analysis and Machine
  Intelligence}, 44\penalty0 (11):\penalty0 7436--7456, 2022.
\newblock ISSN 1939-3539.

\bibitem[He et~al.(2016)He, Zhang, Ren, and Sun]{he2016deep}
He, K., Zhang, X., Ren, S., and Sun, J.
\newblock Deep residual learning for image recognition.
\newblock In \emph{Proceedings of the IEEE conference on computer vision and
  pattern recognition}, pp.\  770--778, 2016.

\bibitem[Jacobson(2005)]{Jacobson2005}
Jacobson, M.~Z.
\newblock \emph{Fundamentals of Atmospheric Modeling}.
\newblock Cambridge University Press, 2 edition, 2005.

\bibitem[Keisler(2022)]{keisler2022GNNs}
Keisler, R.
\newblock {F}orecasting {G}lobal {W}eather with {G}raph {N}eural {N}etworks.
\newblock \emph{arXiv preprint arXiv:2202.07575}, 2022.

\bibitem[Khoo et~al.(2021)Khoo, Lu, and Ying]{YuehawKhoo2021SolvingPP}
Khoo, Y., Lu, J., and Ying, L.
\newblock Solving parametric {PDE} problems with artificial neural networks.
\newblock \emph{European Journal of Applied Mathematics}, 2021.

\bibitem[Kochkov et~al.(2021)Kochkov, Smith, Alieva, Wang, Brenner, and
  Hoyer]{Dmitrii2021MLaccCFD}
Kochkov, D., Smith, J.~A., Alieva, A., Wang, Q., Brenner, M.~P., and Hoyer, S.
\newblock Machine learning-accelerated computational fluid dynamics.
\newblock \emph{Proceedings of the National Academy of Sciences}, 118\penalty0
  (21):\penalty0 e2101784118, 2021.

\bibitem[Lam et~al.(2022)Lam, Sanchez-Gonzalez, Willson, Wirnsberger,
  Fortunato, Pritzel, Ravuri, Ewalds, Alet, Eaton-Rosen,
  et~al.]{lam2022graphcast}
Lam, R., Sanchez-Gonzalez, A., Willson, M., Wirnsberger, P., Fortunato, M.,
  Pritzel, A., Ravuri, S., Ewalds, T., Alet, F., Eaton-Rosen, Z., et~al.
\newblock {G}raph{C}ast: {L}earning skillful medium-range global weather
  forecasting.
\newblock \emph{arXiv preprint arXiv:2212.12794}, 2022.

\bibitem[LeVeque(2002)]{LeVeque2002FVM}
LeVeque, R.~J.
\newblock \emph{Finite Volume Methods for Hyperbolic Problems}.
\newblock Cambridge Texts in Applied Mathematics. Cambridge University Press,
  2002.

\bibitem[LeVeque(2007)]{LeVeque2007FDM}
LeVeque, R.~J.
\newblock \emph{Finite Difference Methods for Ordinary and Partial Differential
  Equations}.
\newblock Society for Industrial and Applied Mathematics, 2007.

\bibitem[Li \& Farimani(2022)Li and Farimani]{ZijieLi2022GraphNN}
Li, Z. and Farimani, A.~B.
\newblock Graph neural network-accelerated {L}agrangian fluid simulation.
\newblock \emph{Computers \& Graphics}, 103:\penalty0 201--211, 2022.
\newblock ISSN 0097-8493.

\bibitem[Li et~al.(2020)Li, Kovachki, Azizzadenesheli, Liu, Bhattacharya,
  Stuart, and Anandkumar]{li2020fourier}
Li, Z., Kovachki, N., Azizzadenesheli, K., Liu, B., Bhattacharya, K., Stuart,
  A., and Anandkumar, A.
\newblock Fourier neural operator for parametric partial differential
  equations.
\newblock \emph{arXiv preprint arXiv:2010.08895}, 2020.

\bibitem[Li et~al.(2022)Li, Meidani, Yadav, and Barati~Farimani]{li2022graph}
Li, Z., Meidani, K., Yadav, P., and Barati~Farimani, A.
\newblock Graph neural networks accelerated molecular dynamics.
\newblock \emph{The Journal of Chemical Physics}, 156\penalty0 (14):\penalty0
  144103, 2022.

\bibitem[Liu et~al.(1994)Liu, Osher, and Chan]{Liu1994WENO}
Liu, X.-D., Osher, S., and Chan, T.
\newblock {W}eighted {E}ssentially {N}on-oscillatory {S}chemes.
\newblock \emph{Journal of Computational Physics}, 115\penalty0 (1):\penalty0
  200--212, 1994.
\newblock ISSN 0021-9991.

\bibitem[Liu et~al.(2022)Liu, Sun, and Wang]{liu2022predicting}
Liu, X.-Y., Sun, H., and Wang, J.-X.
\newblock Predicting parametric spatiotemporal dynamics by multi-resolution
  {PDE} structure-preserved deep learning.
\newblock \emph{arXiv preprint arXiv:2205.03990}, 2022.

\bibitem[Liu et~al.(2019)Liu, Li, Sun, and Yu]{YangLiu2019AdvancesIG}
Liu, Y., Li, J., Sun, S., and Yu, B.
\newblock {A}dvances in {G}aussian random field generation: a review.
\newblock \emph{Computational Geosciences}, 2019.

\bibitem[Liu et~al.(2021)Liu, Lin, Cao, Hu, Wei, Zhang, Lin, and
  Guo]{Liu2021SwinTransformer}
Liu, Z., Lin, Y., Cao, Y., Hu, H., Wei, Y., Zhang, Z., Lin, S., and Guo, B.
\newblock {S}win {T}ransformer: {H}ierarchical {V}ision {T}ransformer using
  {S}hifted {W}indows.
\newblock In \emph{Proceedings of the IEEE/CVF International Conference on
  Computer Vision (ICCV)}, 2021.

\bibitem[Long et~al.(2018)Long, Lu, Ma, and Dong]{long2018pde}
Long, Z., Lu, Y., Ma, X., and Dong, B.
\newblock {PDE}-{N}et: {L}earning {PDE}s from {D}ata.
\newblock In \emph{Proceedings of the 35th International Conference on Machine
  Learning}, volume~80, pp.\  3208--3216. PMLR, 2018.

\bibitem[Long et~al.(2019)Long, Lu, and Dong]{long2019pde}
Long, Z., Lu, Y., and Dong, B.
\newblock {PDE}-{N}et 2.0: Learning {PDE}s from data with a numeric-symbolic
  hybrid deep network.
\newblock \emph{Journal of Computational Physics}, 399:\penalty0 108925, 2019.

\bibitem[Lu et~al.(2021)Lu, Jin, Pang, Zhang, and
  Karniadakis]{LuLu2021LearningNO}
Lu, L., Jin, P., Pang, G., Zhang, Z., and Karniadakis, G.~E.
\newblock Learning nonlinear operators via {D}eep{ON}et based on the universal
  approximation theorem of operators.
\newblock \emph{Nature Machine Intelligence}, 2021.

\bibitem[Ogoke et~al.(2021)Ogoke, Meidani, Hashemi, and
  Farimani]{FrancisOgoke2021GraphCN}
Ogoke, F., Meidani, K., Hashemi, A., and Farimani, A.~B.
\newblock Graph convolutional networks applied to unstructured flow field data.
\newblock \emph{Machine Learning: Science and Technology}, 2021.

\bibitem[Pant et~al.(2021)Pant, Doshi, Bahl, and
  Farimani]{PranshuPant2021DeepLF}
Pant, P., Doshi, R., Bahl, P., and Farimani, A.~B.
\newblock Deep learning for reduced order modelling and efficient temporal
  evolution of fluid simulations.
\newblock \emph{Physics of Fluids}, 2021.

\bibitem[Pathak et~al.(2022)Pathak, Subramanian, Harrington, Raja,
  Chattopadhyay, Mardani, Kurth, Hall, Li, Azizzadenesheli,
  et~al.]{pathak2022fourcastnet}
Pathak, J., Subramanian, S., Harrington, P., Raja, S., Chattopadhyay, A.,
  Mardani, M., Kurth, T., Hall, D., Li, Z., Azizzadenesheli, K., et~al.
\newblock {F}our{C}ast{N}et: {A} {G}lobal {D}ata-driven {H}igh-resolution
  {W}eather {M}odel using {A}daptive {F}ourier {N}eural {O}perators.
\newblock \emph{arXiv preprint arXiv:2202.11214}, 2022.

\bibitem[Pfaff et~al.(2021)Pfaff, Fortunato, Sanchez-Gonzalez, and
  Battaglia]{TobiasPfaff2020LearningMS}
Pfaff, T., Fortunato, M., Sanchez-Gonzalez, A., and Battaglia, P.~W.
\newblock {L}earning {M}esh-{B}ased {S}imulation with {G}raph {N}etworks.
\newblock In \emph{International Conference on Learning Representations}, 2021.

\bibitem[Pope(2000)]{Pope2000Turbulent}
Pope, S.~B.
\newblock \emph{Turbulent Flows}.
\newblock Cambridge University Press, 2000.

\bibitem[Rasp \& Thuerey(2021)Rasp and Thuerey]{Rasp2021ResNet}
Rasp, S. and Thuerey, N.
\newblock {D}ata-{D}riven {M}edium-{R}ange {W}eather {P}rediction {W}ith a
  {R}esnet {P}retrained on {C}limate {S}imulations: {A} {N}ew {M}odel for
  {W}eather{B}ench.
\newblock \emph{Journal of Advances in Modeling Earth Systems}, 13\penalty0
  (2):\penalty0 e2020MS002405, 2021.

\bibitem[Roe(1986)]{Roe1986minmod}
Roe, P.~L.
\newblock {C}haracteristic-{B}ased {S}chemes for the {E}uler {E}quations.
\newblock \emph{Annual Review of Fluid Mechanics}, 18\penalty0 (1):\penalty0
  337--365, 1986.

\bibitem[Ronneberger et~al.(2015)Ronneberger, Fischer, and
  Brox]{OlafRonneberger2015UNetCN}
Ronneberger, O., Fischer, P., and Brox, T.
\newblock {U}-{N}et: {C}onvolutional {N}etworks for {B}iomedical {I}mage
  {S}egmentation.
\newblock In \emph{Medical Image Computing and Computer-Assisted Intervention
  -- MICCAI 2015}, pp.\  234--241, Cham, 2015. Springer International
  Publishing.

\bibitem[Rudy et~al.(2017)Rudy, Brunton, Proctor, and Kutz]{Samuel2017PDE-FIND}
Rudy, S.~H., Brunton, S.~L., Proctor, J.~L., and Kutz, J.~N.
\newblock Data-driven discovery of partial differential equations.
\newblock \emph{Science Advances}, 3\penalty0 (4):\penalty0 e1602614, 2017.

\bibitem[Schultz et~al.(2021)Schultz, Betancourt, Gong, Kleinert, Langguth,
  Leufen, Mozaffari, and Stadtler]{Schultz2021CanDLbeat}
Schultz, M.~G., Betancourt, C., Gong, B., Kleinert, F., Langguth, M., Leufen,
  L.~H., Mozaffari, A., and Stadtler, S.
\newblock Can deep learning beat numerical weather prediction?
\newblock \emph{Philosophical Transactions of the Royal Society A:
  Mathematical, Physical and Engineering Sciences}, 379\penalty0
  (2194):\penalty0 20200097, 2021.

\bibitem[Shu(1998)]{Shu1998ENOandWENO}
Shu, C.-W.
\newblock \emph{Essentially non-oscillatory and weighted essentially
  non-oscillatory schemes for hyperbolic conservation laws}, pp.\  325--432.
\newblock Springer Berlin Heidelberg, 1998.
\newblock ISBN 978-3-540-49804-9.

\bibitem[So et~al.(2021)So, Li, Wu, and Yung]{So2021DSN}
So, C.~C., Li, T.~O., Wu, C., and Yung, S.~P.
\newblock {D}ifferential {S}pectral {N}ormalization ({DSN}) for {PDE}
  {D}iscovery.
\newblock \emph{Proceedings of the AAAI Conference on Artificial Intelligence},
  35\penalty0 (11):\penalty0 9675--9684, May 2021.

\bibitem[Su et~al.(2019)Su, Jampani, Sun, Gallo, Learned-Miller, and
  Kautz]{Su2019Pixel}
Su, H., Jampani, V., Sun, D., Gallo, O., Learned-Miller, E., and Kautz, J.
\newblock {P}ixel-{A}daptive {C}onvolutional {N}eural {N}etworks.
\newblock In \emph{2019 IEEE/CVF Conference on Computer Vision and Pattern
  Recognition (CVPR)}, pp.\  11158--11167. IEEE Computer Society, 2019.

\bibitem[Thomas(2013)]{Thomas2013NPDE-FDM}
Thomas, J.~W.
\newblock \emph{Numerical partial differential equations: finite difference
  methods}, volume~22.
\newblock Springer Science \& Business Media, 2013.

\bibitem[Tran et~al.(2021)Tran, Mathews, Xie, and Ong]{tran2021factorized}
Tran, A., Mathews, A., Xie, L., and Ong, C.~S.
\newblock {F}actorized {F}ourier {N}eural {O}perators.
\newblock \emph{arXiv preprint arXiv:2111.13802}, 2021.

\bibitem[Um et~al.(2020)Um, Brand, Fei, Holl, and
  Thuerey]{Um2020SolverintheLoop}
Um, K., Brand, R., Fei, Y.~R., Holl, P., and Thuerey, N.
\newblock {S}olver-in-the-{L}oop: {L}earning from {D}ifferentiable {P}hysics to
  {I}nteract with {I}terative {PDE}-{S}olvers.
\newblock In \emph{Proceedings of the 34th International Conference on Neural
  Information Processing Systems}. Curran Associates Inc., 2020.

\bibitem[{van Leer}(1974)]{VanLeer1974}
{van Leer}, B.
\newblock Towards the ultimate conservative difference scheme. {II}.
  {M}onotonicity and conservation combined in a second-order scheme.
\newblock \emph{Journal of Computational Physics}, 14\penalty0 (4):\penalty0
  361--370, 1974.
\newblock ISSN 0021-9991.

\bibitem[{van Leer}(1979)]{VanLeer1979}
{van Leer}, B.
\newblock Towards the ultimate conservative difference scheme. {V}. {A}
  second-order sequel to {G}odunov's method.
\newblock \emph{Journal of Computational Physics}, 32\penalty0 (1):\penalty0
  101--136, 1979.
\newblock ISSN 0021-9991.

\bibitem[Wang et~al.(2020)Wang, Kashinath, Mustafa, Albert, and
  Yu]{Wang2020TowardsPIDLforTurbulence}
Wang, R., Kashinath, K., Mustafa, M., Albert, A., and Yu, R.
\newblock {T}owards {P}hysics-{I}nformed {D}eep {L}earning for {T}urbulent
  {F}low {P}rediction.
\newblock In \emph{Proceedings of the 26th ACM SIGKDD International Conference
  on Knowledge Discovery \& Data Mining}, pp.\  1457--1466. Association for
  Computing Machinery, 2020.

\bibitem[Weyn et~al.(2021)Weyn, Durran, Caruana, and
  Cresswell-Clay]{Weyn2021modifiedDLWP}
Weyn, J.~A., Durran, D.~R., Caruana, R., and Cresswell-Clay, N.
\newblock {S}ub-{S}easonal {F}orecasting {W}ith a {L}arge {E}nsemble of
  {D}eep-{L}earning {W}eather {P}rediction {M}odels.
\newblock \emph{Journal of Advances in Modeling Earth Systems}, 13\penalty0
  (7):\penalty0 e2021MS002502, 2021.

\bibitem[Zhu \& Zabaras(2018)Zhu and Zabaras]{YinhaoZhu2018BayesianDC}
Zhu, Y. and Zabaras, N.
\newblock {B}ayesian {D}eep {C}onvolutional {E}ncoder-{D}ecoder {N}etworks for
  {S}urrogate {M}odeling and {U}ncertainty {Q}uantification.
\newblock \emph{Journal of Computational Physics}, 2018.

\end{thebibliography}

\newpage
\appendix
\onecolumn
\section{Mathematical backgrounds}
Some of the following conclusions can be found in \cite{long2018pde,So2021DSN}. We restate and summarize the simplified versions for readers' convenience.
\begin{Prop}\label{prop:moment-constraints}
    For any smooth function $h$ and a compactly supported kernel $K$,
    \[(K\circledast h)(x,y)=\frac{\partial^{ p+ q}}{\partial x^{ p}\partial y^{ q}}h(x,y)+\mathcal{O}\left(\left((\Delta x)^2+(\Delta y)^2\right)^{(r+1)/2}\right)\]
    if and only if Eq. (\ref{eq:moment-constraints}) holds.
\end{Prop}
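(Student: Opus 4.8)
The plan is to obtain both implications from a single Taylor expansion. By the definition of $\circledast$, $(K\circledast h)(x,y)=\sum_{s,t}K(s,t)\,h(x+s\Delta x,\,y+t\Delta y)$, a finite sum because $K$ is compactly supported, say $K(s,t)=0$ unless $|s|,|t|\le L$. First I would expand $h$ about $(x,y)$ to total order $N:=p+q+r$,
\[h(x+s\Delta x,y+t\Delta y)=\sum_{u+v\le N}\frac{\partial^{u+v}h}{\partial x^u\partial y^v}(x,y)\,\frac{(s\Delta x)^u(t\Delta y)^v}{u!\,v!}+R_{s,t},\qquad R_{s,t}=\mathcal{O}\big((|s|\Delta x+|t|\Delta y)^{N+1}\big),\]
substitute it, and interchange the finite sums: the coefficient multiplying $\frac{\partial^{u+v}h}{\partial x^u\partial y^v}(x,y)$ is exactly $\sum_{s,t}K(s,t)\frac{s^ut^v}{u!v!}(\Delta x)^u(\Delta y)^v=M(K)(u,v)$, which yields the master identity
\[(K\circledast h)(x,y)=\sum_{u+v\le N}M(K)(u,v)\,\frac{\partial^{u+v}h}{\partial x^u\partial y^v}(x,y)+\sum_{s,t}K(s,t)R_{s,t},\]
valid for every smooth $h$ whether or not \eqref{eq:moment-constraints} holds. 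Everything else is reading off consequences of this identity.

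For the ``if'' direction I would assume \eqref{eq:moment-constraints}: the finite sum then collapses to the single surviving term $(u,v)=(p,q)$, equal to $\frac{\partial^{p+q}h}{\partial x^p\partial y^q}(x,y)$, so only $\sum_{s,t}K(s,t)R_{s,t}$ remains to be estimated. I would bound it using $|s|,|t|\le L$, which gives $R_{s,t}=\mathcal{O}\big((\Delta x+\Delta y)^{N+1}\big)$, together with the fact that normalizing $M(K)(p,q)=1$ while killing the lower moments pins the kernel entries to the scale $|K(s,t)|=\mathcal{O}\big((\Delta x)^{-p}(\Delta y)^{-q}\big)$; since $(N+1)-(p+q)=r+1$ and $\Delta x+\Delta y\asymp\big((\Delta x)^2+(\Delta y)^2\big)^{1/2}$ when the two mesh sizes shrink comparably, this remainder is $\mathcal{O}\big(((\Delta x)^2+(\Delta y)^2)^{(r+1)/2}\big)$, the asserted estimate.

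For the ``only if'' direction I would test the hypothesis against the monomials $h_{a,b}(x',y')=\frac{(x'-x)^a(y'-y)^b}{a!\,b!}$ for every $(a,b)$ with $a+b\le N$. Each $h_{a,b}$ is a polynomial of degree $\le N$, so its order-$N$ Taylor remainder vanishes identically and the master identity becomes the exact equality $(K\circledast h_{a,b})(x,y)=\sum_{u+v\le N}M(K)(u,v)\,\frac{\partial^{u+v}h_{a,b}}{\partial x^u\partial y^v}(x,y)=M(K)(a,b)$; feeding this into the hypothesis gives $M(K)(a,b)=\delta_{a,p}\delta_{b,q}+\mathcal{O}\big(((\Delta x)^2+(\Delta y)^2)^{(r+1)/2}\big)$. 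Since $M(K)$ is the data parameterizing the difference layer and does not itself vary as the mesh is refined, letting $\Delta x,\Delta y\to0$ collapses the error term and leaves $M(K)(a,b)=\delta_{a,p}\delta_{b,q}$, i.e.\ \eqref{eq:moment-constraints}.

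The hard part will not be the Taylor computation, which is the textbook finite-difference truncation-error identity, but the order bookkeeping around it: one must track that normalizing $M(K)(p,q)=1$ (while the other low moments vanish) forces the kernel entries to the scale $(\Delta x)^{-p}(\Delta y)^{-q}$, so that the naive $\mathcal{O}((\Delta x+\Delta y)^{N+1})$ remainder is boosted by exactly $p+q$ powers of the mesh down to the claimed order $r+1$. One should also fix a sensible meaning for ``$\Delta x,\Delta y\to0$'' --- keeping the two comparable, with $\Delta x=\Delta y$ giving the familiar one-parameter statement --- and, in the converse, be clear that the higher free moments must not be allowed to persist as nonzero constants in the limit, since otherwise the error would fail to be $\mathcal{O}\big(((\Delta x)^2+(\Delta y)^2)^{(r+1)/2}\big)$ in the first place.
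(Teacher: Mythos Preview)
Your proposal is correct and follows essentially the same route as the paper: Taylor-expand $h(x+s\Delta x,\,y+t\Delta y)$, interchange the finite sums to identify the coefficient of each partial derivative as the moment $M(K)(u,v)$, and read both directions off the resulting master identity. You are in fact more careful than the paper in expanding to total order $N=p+q+r$ (the paper's displayed proof truncates at $r$ and does not visibly track the $(\Delta x)^{-p}(\Delta y)^{-q}$ scaling of $K$), and your explicit monomial tests $h_{a,b}$ for the converse are a clean addition where the paper just says ``the proposition immediately follows.''
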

\begin{proof}
    Starting from Eq. (\ref{eq:conv-difflayer-concept}), as $\Delta x$ and $\Delta y$ approach zero, we have by truncating the Taylor series that
    \begin{align*}
        (K\circledast h)(x,y) & =\sum_{s,t\in\mZ}K(s,t)h(x+s\Delta x,y+t\Delta y)                                                                                                                                                                         \\
                              & =\sum_{s,t\in\mZ}\left[K(s,t)\sum_{l=0}^r\frac{(s(\Delta x)\partial_x+t(\Delta y)\partial_y)^l}{l!}h(x,y)+\mathcal{O}\left(\left((s\Delta x)^2+(t\Delta y)^2\right)^{(r+1)/2}\right)\right]                               \\
                              & =\sum_{s,t\in\mZ}K(s,t)\sum_{l=0}^r\sum_{u+v=l}\frac{(s\Delta x)^u}{u!}\frac{(t\Delta y)^v}{v!}\frac{\partial^lh}{\partial x^u\partial y^v}(x,y)+\mathcal{O}\left(\left((\Delta x)^2+(\Delta y)^2\right)^{(r+1)/2}\right) \\
                              & =\sum_{u+v\le r}M(K)(u,v)\frac{\partial^lh}{\partial x^u\partial y^v}(x,y)+\mathcal{O}\left(\left((\Delta x)^2+(\Delta y)^2\right)^{(r+1)/2}\right),
    \end{align*}
    and that the proposition immediately follows.
\end{proof}
Note that we take additional spatial mesh steps $\Delta x$ and $\Delta y$ together with the truncation order $(r+1)$ into consideration compared with the conclusion given in PDE-Net. From now on, we fix the sizes of both the kernel and the moment matrix as $(2L+1)\times(2L+1)$.
\begin{Prop}\label{prop:Q-invertible}
    There exists a unique moment matrix $M(K)$ corresponding with any kernel $K$ and vice versa. In other words, the map $K\mapsto M(K)$ is invertible.
\end{Prop}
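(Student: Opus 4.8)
The plan is to exploit the fact that $K\mapsto M(K)$ is a \emph{linear} map between two copies of $\mR^{(2L+1)\times(2L+1)}$, so that invertibility is equivalent to injectivity, and moreover that this map is separable (a tensor product), which reduces everything to a one-dimensional Vandermonde computation.

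First I would make the index ranges explicit: since $K$ is supported on $\{-L,\dots,L\}^{2}$ and the moment matrix has size $(2L+1)\times(2L+1)$, the indices $(u,v)$ run over $\{0,1,\dots,2L\}^{2}$. Then I would rewrite the definition of $M(K)$ as
\[
M(K)(u,v)=\sum_{s,t=-L}^{L} A(u,s)\,K(s,t)\,B(v,t),\qquad A(u,s):=\frac{s^{u}(\Delta x)^{u}}{u!},\quad B(v,t):=\frac{t^{v}(\Delta y)^{v}}{v!},
\]
that is, in matrix form $M(K)=A\,K\,B^{\trans}$ with $A,B\in\mR^{(2L+1)\times(2L+1)}$, or equivalently $\operatorname{vec}(M(K))=(B\otimes A)\operatorname{vec}(K)$.

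Next I would show $A$ is invertible ($B$ being identical with $\Delta x$ replaced by $\Delta y$). Factor $A=D_{x}W$, where $D_{x}=\diag\!\bigl(1,\Delta x,\tfrac{(\Delta x)^{2}}{2!},\dots,\tfrac{(\Delta x)^{2L}}{(2L)!}\bigr)$ and $W(u,s)=s^{u}$ is the Vandermonde matrix with nodes $s\in\{-L,\dots,L\}$. Since $\Delta x>0$, the diagonal matrix $D_{x}$ is invertible; since the $2L+1$ nodes $-L,\dots,L$ are pairwise distinct, $\det W=\prod_{-L\le s<s'\le L}(s'-s)\neq 0$, so $W$ is invertible. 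Hence $A$ is invertible, and likewise $B$, so $B\otimes A$ is invertible and $K\mapsto M(K)$ is a linear bijection with explicit inverse $K=A^{-1}\,M(K)\,(B^{\trans})^{-1}$. This gives existence and uniqueness in both directions.

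The only mildly delicate point is the bookkeeping: matching the moment-index range $\{0,\dots,2L\}$ with the kernel-support range $\{-L,\dots,L\}$ so that $A$ and $B$ are genuinely square. Once that is pinned down, the statement is just ``the Vandermonde determinant is nonzero for distinct nodes'' combined with the separable structure, and there is no analytic obstacle.
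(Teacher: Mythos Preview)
Your proposal is correct and essentially identical to the paper's proof: the paper writes $M(K)=Q_xKQ_y^{\trans}$ with $Q_x(u,s)=s^u(\Delta x)^u/u!$ (your $A$), factors $Q_x$ as a diagonal matrix times a Vandermonde matrix, and concludes invertibility from the distinct nodes. The only cosmetic differences are that the paper pulls out $\diag(0!,1!,\dots,(2L)!)^{-1}$ and leaves the Vandermonde nodes as $\{-L\Delta x,\dots,L\Delta x\}$, whereas you pull out $\diag\bigl((\Delta x)^u/u!\bigr)$ and use nodes $\{-L,\dots,L\}$; your added Kronecker-product remark is a harmless bonus.
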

\begin{proof}
    By identifying the kernel $K$ with a $(2L+1)\times(2L+1)$ matrix with indices from $-L$ to $L$ inclusively, Eq. (\ref{eq:moment-constraints}) implies
    \[M(K)(u,v)=\sum_{s=-L}^L\sum_{t=-L}^L\frac{s^u(\Delta x)^u}{u!}K(s,t)\frac{t^v(\Delta y)^v}{v!}.\]
    Hence $M(K)=Q_xKQ_y^\trans $ for
    \[Q_x(u,s)=\frac{s^u(\Delta x)^u}{u!},\,Q_y(u,s)=\frac{s^u(\Delta y)^u}{u!},\]
    where $u=0,\cdots, 2L$ and $s=-L,\cdots,0,\cdots,L$. $0^0$ is defined as $1$ as convention.
    \begin{align*}
        Q_x=
        \begin{pmatrix}
            0!                     \\
             & 1!                  \\
             &    & \ddots         \\
             &    &        & (2L)!
        \end{pmatrix}^{-1}
        \begin{pmatrix}
            1                 & \cdots & 1      & \cdots & 1                \\
            (-L\Delta x)^1    & \cdots & 0^1    & \cdots & (L\Delta x)^1    \\
            \vdots            &        & \vdots &        & \vdots           \\
            (-L\Delta x)^{2L} & \cdots & 0^{2L} & \cdots & (L\Delta x)^{2L}
        \end{pmatrix}
    \end{align*}
    is a multiplication of an invertible diagonal matrix and a non-singular Vandermonde matrix and thus invertible. Analogously, $Q_y$ is invertible as well, and it follows that the linear transformation $K\mapsto M(K)$ is invertible.
\end{proof}
\begin{Prop}\label{prop:kernels-in-hyperplane}
    For fixed $(p,q)$ and $r$ such that $p+q+r\le 2L$, the set $\mathcal{S}_{(p,q)}^r$ consisting of all the kernels satisfying Eq. (\ref{eq:moment-constraints}) forms a hyperplane of dimension $(2L+1)^2-(p+q+r+1)(p+q+r+2)/2$.
\end{Prop}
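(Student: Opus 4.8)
The plan is to transport the whole problem into moment-matrix space using the linear isomorphism $K\mapsto M(K)$ from Prop.~\ref{prop:Q-invertible}. Since this map is an invertible \emph{linear} transformation between the $(2L+1)^2$-dimensional space of kernels (identified with $(2L+1)\times(2L+1)$ matrices) and the $(2L+1)^2$-dimensional space of moment matrices (indexed by $(u,v)$ with $0\le u,v\le 2L$), it carries affine subspaces to affine subspaces and preserves their dimension. So it suffices to describe the image of $\mathcal{S}_{(p,q)}^r$ under $K\mapsto M(K)$ and compute its dimension there.

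In moment-matrix space, the constraint in Eq.~(\ref{eq:moment-constraints}) reads: $M(K)(u,v)=\delta_{u,p}\delta_{v,q}$ for every lattice point $(u,v)$ with $u,v\ge 0$ and $u+v\le p+q+r$. First I would check that each such $(u,v)$ is a legitimate index, i.e. $0\le u,v\le 2L$; this is immediate since $u\le u+v\le p+q+r\le 2L$ by hypothesis, and symmetrically for $v$. Hence each constraint simply pins one distinct coordinate of the moment matrix to a prescribed value and leaves every other coordinate free. The set of moment matrices satisfying all of them is therefore a translate of the coordinate subspace spanned by the unconstrained entries — an affine subspace of codimension equal to the number of constrained entries — and it is nonempty (take the moment matrix with a $1$ in slot $(p,q)$, a $0$ in every other slot $(u,v)$ with $u+v\le p+q+r$, and arbitrary values elsewhere).

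It then remains to count the constrained entries: the number of $(u,v)$ with $u,v\ge 0$ and $u+v\le p+q+r$. Writing $m=p+q+r$ and grouping by the value $j=u+v$, this count is $\sum_{j=0}^{m}(j+1)=(m+1)(m+2)/2$. Substituting $m=p+q+r$ gives codimension $(p+q+r+1)(p+q+r+2)/2$, hence dimension $(2L+1)^2-(p+q+r+1)(p+q+r+2)/2$ in moment-matrix space, and pulling back through the isomorphism $K\mapsto M(K)$ gives the same dimension for $\mathcal{S}_{(p,q)}^r$.

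I do not expect a serious obstacle here; the two points needing a little care are (i) using the hypothesis $p+q+r\le 2L$ to guarantee that every moment constraint refers to an actual entry of the $(2L+1)\times(2L+1)$ moment matrix, so that the constraints are genuinely coordinate-independent and the codimension equals the naive count rather than something smaller, and (ii) reading ``hyperplane'' in the affine (positive-codimension) sense rather than its literal codimension-one meaning.
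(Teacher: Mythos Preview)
Your proposal is correct and follows essentially the same approach as the paper: both use the linear isomorphism $K\mapsto M(K)$ from Prop.~\ref{prop:Q-invertible} to reduce the problem to counting the free entries in the moment matrix, arriving at $(2L+1)^2-(p+q+r+1)(p+q+r+2)/2$. The only cosmetic difference is that the paper pulls the coordinate basis $\{E_{uv}\}$ back explicitly through $Q_x^{-1}(\cdot)Q_y^{-\trans}$ to exhibit $\mathcal{S}_{(p,q)}^r=K_{pq}^0+\Span(K_{uv}^0)_{u+v>p+q+r}$ in kernel space, whereas you invoke the abstract dimension-preservation of linear isomorphisms; the paper's more constructive version has the practical payoff (noted in the subsequent remark) of producing precomputable basis kernels $K_{uv}^0$.
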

\begin{proof}
    Let $E_{ij}$ be the matrix with the only non-zero element $1$ at the position $(i,j)$. By the definition of $\mathcal{S}_{(p,q)}^r$,
    \begin{align*}
        K\in \mathcal{S}_{(p,q)}^r & \Leftrightarrow \exists c_{uv}\in\mR,\,M(K)=E_{pq}+\sum_{u+v>p+q+r}c_{uv}E_{uv}                                           \\
                                   & \Leftrightarrow \exists c_{uv}\in\mR,\,Q_xKQ_y^\trans =E_{pq}+\sum_{u+v>p+q+r}c_{uv}E_{uv}                                \\
                                   & \Leftrightarrow \exists c_{uv}\in\mR,\,K=Q_x^{-1}E_{pq}Q_y^{-\trans} +\sum_{u+v>p+q+r}c_{uv}Q_x^{-1}E_{uv}Q_y^{-\trans} .
    \end{align*}
    Here we abbreviate $\left(Q_y^\trans \right)^{-1}$ as $Q_y^{-\trans} $. Define
    \[K_{uv}^0=Q_x^{-1}E_{uv}Q_y^{-\trans} =Q_x^{-1}[:,i]Q_y^{-1}[:,j]^\trans \]for any $u,v=0,\cdots, 2L$, where $Q_x^{-1}[:,i]$ stands for the $i$-th column of $Q_x^{-1}$, then
    \[\mathcal{S}_{(p,q)}^r=K_{pq}^0+\Span\left(K_{uv}^0\right)_{u+v>p+q+r},\]
    where the notation $\Span(\cdot)$ represents the subspace spanned by the elements inside.
    \begin{equation}\label{eq:channel-num}
        \begin{aligned}
            \sum_{u+v>p+q+r}1 & =(2L+1)^2-\sum_{u+v\le p+q+r}1=(2L+1)^2-\frac12(p+q+r+1)(p+q+r+2)
        \end{aligned}
    \end{equation}
\end{proof}
\begin{Rem}
    We have to emphasize that all the kernels $\left\{K_{uv}^0\right\}_{u,v=0}^{2L}$ mentioned in the proof of Prop. \ref{prop:kernels-in-hyperplane} are constants and independent of the order $r$. Consequently, we prepare and save $\left\{K_{uv}^0\right\}_{u,v=0}^{2L}$ according to Prop. \ref{prop:Q-invertible} in advance before the training stage, which can be shared among convolutional layers corresponding to different derivatives afterward.
\end{Rem}

\begin{Prop}[Flipped kernels]\label{prop:flipped-kernels}
    For $(p,q)=(1,0)$ and any $r$, the kernel $K$ satisfies Eq. (\refeq{eq:moment-constraints}) if and only if
    $K'(s,t)=-K(-s,t)$ satisfies Eq. (\refeq{eq:moment-constraints}).
\end{Prop}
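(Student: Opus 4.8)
The plan is to reduce everything to one computation: how the moment matrix transforms under the flipping map $K\mapsto K'$ with $K'(s,t)=-K(-s,t)$. First I would start from the definition
\[
M(K')(u,v)=\sum_{s,t\in\mZ}\bigl(-K(-s,t)\bigr)\frac{s^ut^v}{u!v!}(\Delta x)^u(\Delta y)^v,
\]
substitute the dummy index $s\mapsto -s$, and pull the factor $(-1)^u$ out of $(-s)^u$. This gives
\[
M(K')(u,v)=-(-1)^u\,M(K)(u,v)=(-1)^{u+1}M(K)(u,v),
\]
which is the only nontrivial manipulation in the argument and is entirely routine.

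Next I would specialize to $(p,q)=(1,0)$. Suppose $K$ satisfies Eq. \eqref{eq:moment-constraints}, i.e. $M(K)(u,v)=\delta_{u,1}\delta_{v,0}$ for all $u+v\le 1+r$. The right-hand side is nonzero only at $(u,v)=(1,0)$, and there the prefactor is $(-1)^{u+1}=(-1)^2=1$; at every other $(u,v)$ with $u+v\le 1+r$ both sides vanish. Hence $M(K')(u,v)=\delta_{u,1}\delta_{v,0}$ for all $u+v\le 1+r$, so $K'$ satisfies the moment constraint as well. It is exactly here that the hypothesis $p=1$ is used: the sign $(-1)^{u+1}$ equals $+1$ precisely when $u=1$, which is why the analogous statement would fail for, e.g., second-order derivatives in $x$.

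Finally, for the converse I would note that the flipping map is an involution, since $(K')'(s,t)=-K'(-s,t)=-\bigl(-K(s,t)\bigr)=K(s,t)$; applying the direction just proved to $K'$ in place of $K$ then shows that $K'$ satisfying Eq. \eqref{eq:moment-constraints} forces $K$ to satisfy it too, giving the equivalence. I do not expect any genuine obstacle; the only point deserving a moment's care is verifying that the sign factor $(-1)^{u+1}$ does not disturb the single nonzero entry of the Kronecker delta, which works out precisely because $(p,q)=(1,0)$.
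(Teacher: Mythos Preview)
Your proposal is correct and matches the paper's proof: both compute $M(K')(u,v)=(-1)^{u+1}M(K)(u,v)$ via the substitution $s\mapsto -s$ and then read off the equivalence of the moment constraints for $(p,q)=(1,0)$. The only cosmetic difference is that the paper phrases the converse directly as an ``iff'' on each moment entry, whereas you invoke the involution $(K')'=K$; both are immediate.
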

\begin{proof}
    Consider the moment matrices for $K$ and $K'$.
    \begin{align*}
        M(K')(u,v) & =\sum_{s=-L}^L\sum_{t=-L}^L\frac{(s\Delta x)^u(t\Delta y)^v}{u!v!}K'(s,t)                               \\
                   & =-\sum_{s=-L}^L\sum_{t=-L}^L\frac{(s\Delta x)^u(t\Delta y)^v}{u!v!}K(-s,t)                              \\
                   & =(-1)^{u+1}\sum_{s=-L}^L\sum_{t=-L}^L\frac{(-s\Delta x)^u(t\Delta y)^v}{u!v!}K(-s,t)                    \\
                   & =(-1)^{u+1}\sum_{s=-L}^L\sum_{t=-L}^L\frac{(s\Delta x)^u(t\Delta y)^v}{u!v!}K(s,t)=(-1)^{u+1}M(K)(u,v).
    \end{align*}
    Hence $M(K)(u,v)=0$ if and only if $M(K)(u,v)=0$, and $M(K)(1,0)=1$ if and only if $M(K')(1,0)=1$.
\end{proof}

\section{Some examples for numerical schemes encoding local features}\label{sec:schemes-with-local-features}
In this part, we list some types of classical numerical methods accounting for local features. For simplicity, the numerical methods are illustrated in 1D cases since similar derivations can be applied to high-dimensional cases with little difficulty. We choose the forward Euler integration with a fixed time step $\Delta t$, and the spatial grid points are assumed to be distributed uniformly with $\Delta x$ as the interval length. The superscripts and the subscripts indicate the indices along the temporal and spatial dimensions, respectively.
\subsection{Upwind schemes}
Consider the following 1D advection equation
\[\frac{\partial u}{\partial t}+c\frac{\partial u}{\partial x}=0,\,u:[0,1]\to\mR.\]It follows that the forward update rule reads
\[U_j^{m+1}=U_j^m-(c\Delta t)D(U^m)_j,\]
where $U_j^m$ represents the numerical solutions, and $D(U^m)_j$ is an approximation of $\partial_x U^m$ at the spatial index $j$. Intuitively one may choose the central difference $D(U^m)_j=(U^m_{j+1}-U^m_{j-1})/(2\Delta x)$ for a higher order of approximation, but such choice will suffer from unconditional instability by the von Neumann stability analysis \cite{Thomas2013NPDE-FDM}. Upwind schemes adopt a lower-order discretization to improve the stability, which writes
\begin{equation}\label{eq:1st-order-upwind}
    U_j^{m+1}=(1-|\mu|)U_j^m+\frac{\mu+|\mu|}2U_{j-1}^m-\frac{\mu-|\mu|}2U_{j+1}^m
\end{equation}
for a first order discretization and
\begin{equation}\label{eq:2nd-order-upwind}
    U_j^{m+1}=\frac{2-3|\mu|}2U_j^m+(\mu+|\mu|)U_{j-1}^m-(\mu-|\mu|)U_{j+1}^m-\frac{\mu+|\mu|}4U_{j-2}^m+\frac{\mu-|\mu|}4U_{j+2}^m
\end{equation}
for a second-order discretization, where $\mu=c\Delta t/\Delta x$. Other finite difference methods including the Lax-Friedrichs scheme, the Lax-Wendroff scheme, and the Beam-Warming scheme with various dispersion and dissipation \cite{Thomas2013NPDE-FDM} can be written as such updates, where the coefficient of each grid point on the stencil is a function of $\mu$ as well.
\subsection{Flux limiters}
FVMs model $U_j$ as the average of $u$ at the $j$-th cell, and for 1D hyperbolic PDEs with a scalar conservation law
\[\frac{\partial u}{\partial t}+\frac{\partial}{\partial x}f(u)=0,\,u:[0,1]\to\mR,\]
the discrete schemes usually appear as
\[U_j^{m+1}=U_j^m-\frac{\Delta t}{\Delta x}\left[F_{j+1/2}^m-F_{j-1/2}^m\right],\]
where $F_{j\pm1/2}^m$ approximates the flux passing through the right/left edge of the $j$-th cell. By setting $f(u)=cu$ with a constant $c$, the first-order upwind scheme (\ref{eq:1st-order-upwind}) corresponds to the numerical flux
\[F_{j+1/2}^m=\frac{\mu+|\mu|}2U_j^m+\frac{\mu-|\mu|}2U_{j+1}^m.\]
In general, the upwind scheme is not competitive when compared with higher-order schemes such as the Lax-Wendroff scheme for smooth solutions. However, these higher-order schemes tend to generate oscillations near discontinuities due to their dispersive nature \cite{LeVeque2002FVM}. Based on such observations, flux limiters are often used to switch between low-resolution and high-resolution discretizations. Formally, the numerical flux is defined as
\[F_{j+1/2}^m=\phi_{j+1/2}^mF_{H,j+1/2}^m+\left(1-\phi_{j+1/2}^m\right)F_{L,j+1/2}^m,\]
where $F_{H,j+1/2}^m$ and $F_{L,j+1/2}^m$ stand for a high-order flux and a low-order flux, respectively. The flux limiter $\phi_{j+1/2}^m$ depending on the local features of the solutions determines whether the scheme reduces to the lower-order method or not. the total variation diminishing (TVD) property together with the Courant-Friedrichs-Lewy (CFL) condition plays a significant role in constructing a large number of various flux limiters \cite{Roe1986minmod,VanLeer1974,VanLeer1979}.
\subsection{ENO/WENO reconstruction}
Given cell average
\[\bar u_j=\frac1{\Delta x}\int_{I_j}u(\xi)\md\xi\]
of $u$ over the $j$-th interval $I_j$ for all $j$, ENO and WENO schemes \cite{Liu1994WENO,Shu1998ENOandWENO} essentially look for a polynomial approximation $p_j$ of $u$ within $I_j$ by taking polynomial interpolation on the primitive function of $u$. It follows that the values $u_{j\pm1/2}$ are approximated by the evaluations of $p_j$ on the cell edges. Such feasible polynomials are not unique even when the size of stencils is fixed. For instance,
\begin{align*}
    u_{j+1/2} & =-\frac16\bar u_{j-1}+\frac56\bar u_j+\frac13\bar u_{j+1}+\mathcal{O}\left((\Delta x)^3\right),\textrm{ and} \\
    u_{j+1/2} & =\frac13\bar u_j+\frac56\bar u_{j+1}-\frac16\bar u_{j+2}+\mathcal{O}\left((\Delta x)^3\right).
\end{align*}
ENO and WENO schemes select the most suitable polynomials by certain measurements of smoothness in order to reduce the total variation, and one of the major differences is that WENO admits a weighted combination of polynomials coming from different stencils. Combined with certain monotone fluxes, the reconstruction will give good approximations of the fluxes on the cell edges.

\section{Supplementary materials for the experiments}\label{sec:supplementary-for-experiments}
This section formulates the training loss for the experiments and then summarizes extended experimental results for the three PDEs described in Sec. \ref{sec:experiments}.

\subsection{Training loss}
We train both our PDE-Net++ models and the black-box models with a single time step, and then roll out for hundreds of steps in testing. Formally, the updating rules for the observation $\tilde{\bm U}_j^{(i)}$ of the $i$-th trajectory at time step $j$ read
\begin{equation}\label{eq:pdenet_plus_pred}
    \hat{\bm U}_{j+1}^{(i)}=\tilde{\bm U}_j^{(i)}+\hat{\bm\Phi}\left(\bm x, \tilde{\bm U}_j^{(i)}\right)\Delta t+\mathcal{F}_{\mathrm{NN}}\left(\bm x, \tilde{\bm U}_j^{(i)}\right)\Delta t
\end{equation}
for the PDE-Net++ models and
\begin{equation}\label{eq:black_box_pred}
    \begin{aligned}
        \hat{\bm U}_{j+1}^{(i)} & =\tilde{\bm U}_j^{(i)}+\mathcal{F}_{\mathrm{NN}}\left(\bm x, \tilde{\bm U}_j^{(i)}\right)\Delta t
    \end{aligned}
\end{equation}
for the black-box models.

During the training stage, the learnable components are tuned to reduce the distance between the model predictions and the actual observations of the next step. Suppose that we have the training dataset $\left\{\tilde{\bm U}_j^{(i)}\right\}_{j=0,i=1}^{M,N}$, the training loss $L$ is defined as
\[L=L_{\textrm{pred}}+\lambda L_{\textrm{reg}},\]
where the prediction loss $L_{\textrm{pred}}$ is measured as
\[L_{\textrm{pred}}=\frac{1}{NM}\sum_{i=1}^N\sum_{j=1}^MR\left(\hat{\bm U}_j^{(i)},\tilde{\bm U}_{j}^{(i)}\right).\]
The $\lambda L_{\textrm{reg}}$ term is set to zero for all the black-box models and the PDE-Net++ architectures with the difference operators implemented as ``FDM'' described above. Otherwise, $L_{\textrm{reg}}$ penalizes the $L_1$ norm for the lower-right free parameters in the moment matrices with the hyper-parameter $\lambda=0.001$.

\subsection{Extended experimental results}\label{sec:extended_result}

\subsubsection{Trainable parameters}

\begin{table*}
\renewcommand{\arraystretch}{1.2} 
\centering
\caption{Comparison of the number of trainable parameters under different methods and different backbones for different PDEs.
}
\label{tb:num_params}
\setlength{\tabcolsep}{3.0mm}{ 
    \begin{tabular}{c | c | c | c | c | c }
        \hline
        \multirow{2}{*}{\textbf{Backbone}} & \multirow{2}{*}{\textbf{Method}} & \multirow{2}{*}{\textbf{Derivatives}} & \multicolumn{3}{c}{\textbf{\# Params}} \\
        \cline{4-6}
        & & & \textbf{Burgers} & \textbf{\quad FN \quad} & \textbf{\quad NS \quad} \\
        \hline
        \multirow{5}{*}{\makecell[c]{U-Net \\ \cite{OlafRonneberger2015UNetCN}}} & Black-Box & - & 13395970 & 13395970 & 13395329 \\
        \cline{2-6}
        & \multirow{4}{*}{PDE-Net++} & FDM & 13395970 & 13395970 & 13395329 \\
        & & Moment & 13396134 & 13396046 & 13395411\\
        & & TFDL & 13396134 & - & 13395411 \\
        & & TDDL & 13519590 & 13455350 & 13455539 \\
        \hline
        \multirow{5}{*}{\makecell[c]{ConvResNet \\ \cite{liu2022predicting}}} & Black-Box & - & 509334 & 509334 & 507531\\
        \cline{2-6}
        & \multirow{4}{*}{PDE-Net++} & FDM & 509334 & 509334 & 507531 \\
        & & Moment & 509498 & 509410 & 507613 \\
        & & TFDL & 509498 & - & 507613 \\
        & & TDDL & 632954 & 568714 & 567741 \\
        \hline
        \multirow{5}{*}{\makecell[c]{FNO \\ \cite{li2020fourier}}} & Black-Box & - & 465526 & 465526 & 465377 \\
        \cline{2-6}
        & \multirow{4}{*}{PDE-Net++} & FDM & 465526 & 465526 & 465377 \\
        & & Moment & 465690 & 465602 & 465459\\
        & & TFDL & 465690 & - & 465459 \\
        & & TDDL & 589146 & 524906 & 525587 \\
        \hline
        \multirow{5}{*}{\makecell[c]{F-FNO \\ \cite{tran2021factorized}}} & Black-Box & - & 26086 & 26086 & 25937 \\
        \cline{2-6}
        & \multirow{4}{*}{PDE-Net++} & FDM & 26086 & 26086 & 25937\\
        & & Moment & 26250 & 26162 & 26019 \\
        & & TFDL & 26250 & - & 26019 \\
        & & TDDL & 149706 & 85466 & 86147 \\
        \hline
        \multirow{5}{*}{\makecell[c]{Galerkin \\ Transformer \\ \cite{ShuhaoCao2021ChooseAT}}} & Black-Box & - & 1955102 & 1955102 & 1953821 \\
        \cline{2-6}
        & \multirow{4}{*}{PDE-Net++} & FDM & 1955102 & 1955102 & 1953821\\
        & & Moment & 1955266 & 1955178 & 1953903 \\
        & & TFDL & 1955266 & - & 1953903 \\
        & & TDDL & 2078722 & 2014482 & 2014031\\
        \hline
        \end{tabular}
}
\end{table*}

Compared with the black-box models, PDE-Net++ may have additional trainable parameters lying in the trainable difference operators (if exist). Table \ref{tb:num_params} lists the numbers of trainable parameters for all the experiments mentioned in Sec. \ref{sec:experiments}. With the exception of the F-FNO backbone, the increments resulting from the trainable difference operators (Moment, TFDL, and TDDL) are negligible with the same backbones.

\subsubsection{Burgers' equation}
We also investigate the effect of the number of training samples (training size) on different methods for Burgers' equation. Figure \ref{fig:burgers_ablation_train_size} shows that as the training size increases, PDE-Net++ with the TDDL module consistently outperforms both the other models.

\begin{figure}
    \begin{center}
        \centerline{\includegraphics[width=0.6\columnwidth]{./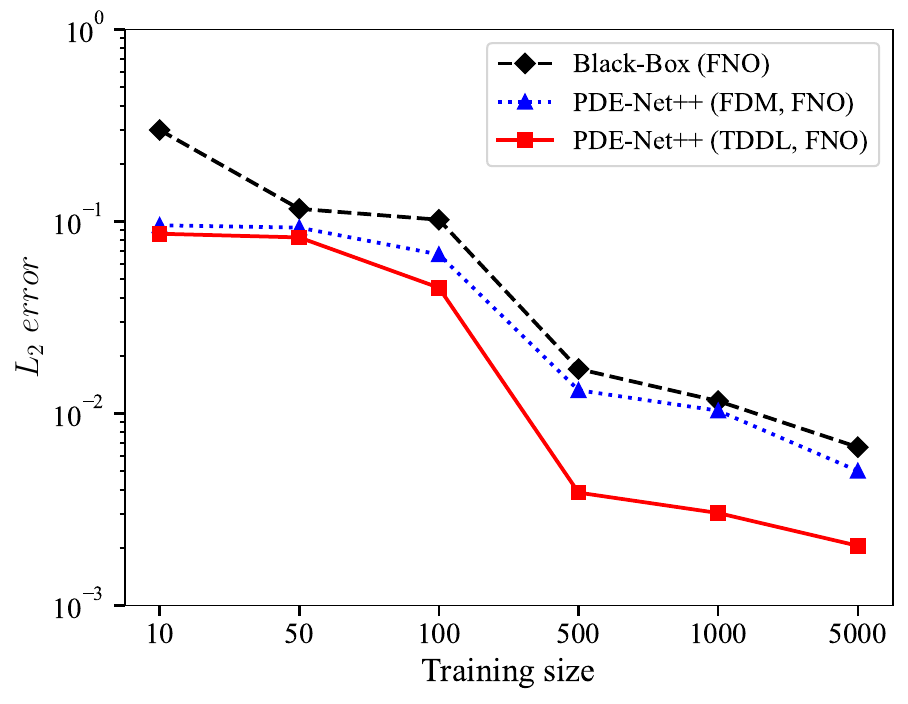}}
        \caption{\textbf{Burgers' equation:} Comparison of PDE-Net++ and the black-box model under various training sizes with the FNO backbone.}
        \label{fig:burgers_ablation_train_size}
    \end{center}
\end{figure}

\subsubsection{FitzHugh-Nagumo equation}
For the FN equation, Figure \ref{fig:fn_result} exhibits snapshots of the reference and predicted solutions of PDE-Net++ (TDDL, FNO) at different time steps during testing. It turns out that our method is able to evolve the diffusion appropriately and that the differences with the reference are indistinguishable.

\begin{figure}
    \begin{center}
        \centerline{\includegraphics[width=0.8\columnwidth]{./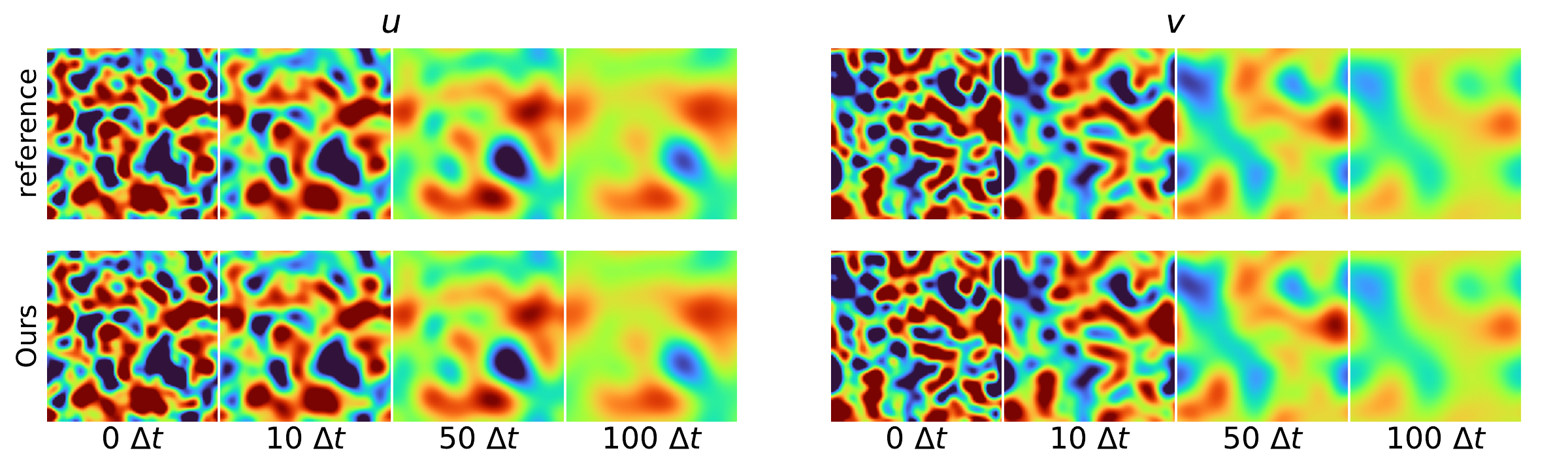}}
        \caption{\textbf{FN equation:} Predicted snapshots at different time steps. `reference' represents ground truth; `Ours' indicates the prediction of PDE-Net++ (TDDL, FNO).
        }
        \label{fig:fn_result}
    \end{center}
\end{figure}

\subsubsection{Navier-Stokes equation}\label{appendix:ns}
The viscosity coefficient $\nu$ has been set to 0.001 in the previous experiments for the NS equation. Here, we also attempt to set $\nu=0.0001$ as a more challenging scenario. The time steps and the number of roll-out steps are changed to $\Delta t=125\delta t=0.00625$ and $M=M'=800$, respectively. The comparison of different methods is displayed in Table \ref{tb:ns_different_method_0.0001}, which indicates that PDE-Net++ (TDDL, FNO) achieved the best performance under such circumstances. In addition, Figure \ref{fig:ns_result} presents a comparison between the predictions of PDE-Net++ (TDDL, FNO) and the ground truth. Four samples are randomly selected from all the testing trajectories of size $N'=100$.

\begin{table}
    \renewcommand{\arraystretch}{1.2} 
    \centering
    \caption{\textbf{Navier-Stokes equation:} Comparison of different methods when viscosity $\nu=0.0001$.}
    \label{tb:ns_different_method_0.0001}
    {\small
        \setlength{\tabcolsep}{1.5mm}{ 
            \begin{tabular}{c | c | c | c  c}
                \hline
                \textbf{Backbone} & \textbf{Method} & \textbf{Derivatives} & \textbf{\textit{SR}}                 & \boldmath{$L_2$} \textbf{\textit{error}}                            \\
                \hline
                \multirow{5}{*}{FNO}               & Black-Box                        & -                                     & 100\%                                & 1.345e-2         \\
                \cline{2-5}
                                                   & \multirow{4}{*}{PDE-Net++}       & FDM                                   & 63\%                                 & 1.079e-2         \\
                                                   &                                  & Moment                                & 97\%                                 & 1.001e-2         \\
                                                   &                                  & TFDL                                & 100\%                                & 9.478e-3         \\
                                                   &                                  & TDDL                                  & 100\%                                & 7.194e-3         \\
                \hline
            \end{tabular}
        }
    }
\end{table}

\begin{figure*}
    \centering
    \includegraphics[width=0.6\textwidth]{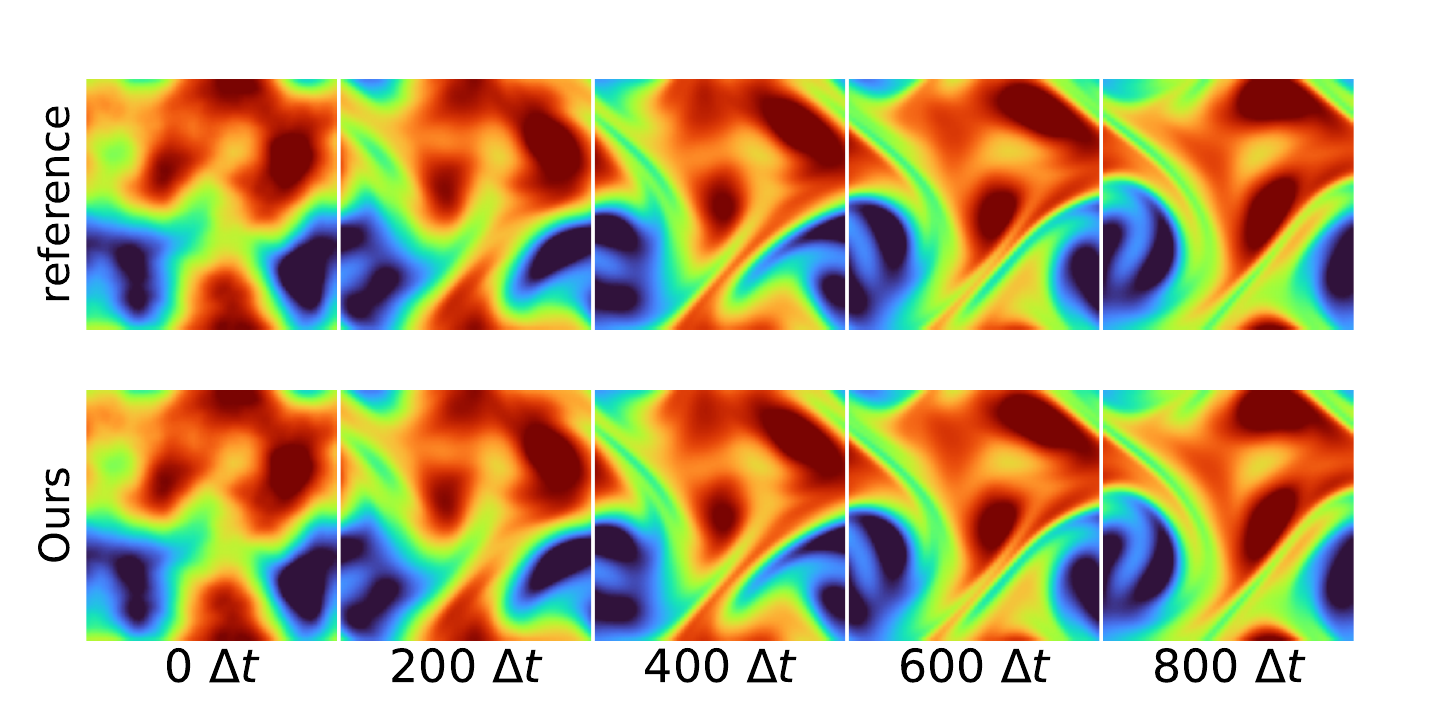}
    \includegraphics[width=0.6\textwidth]{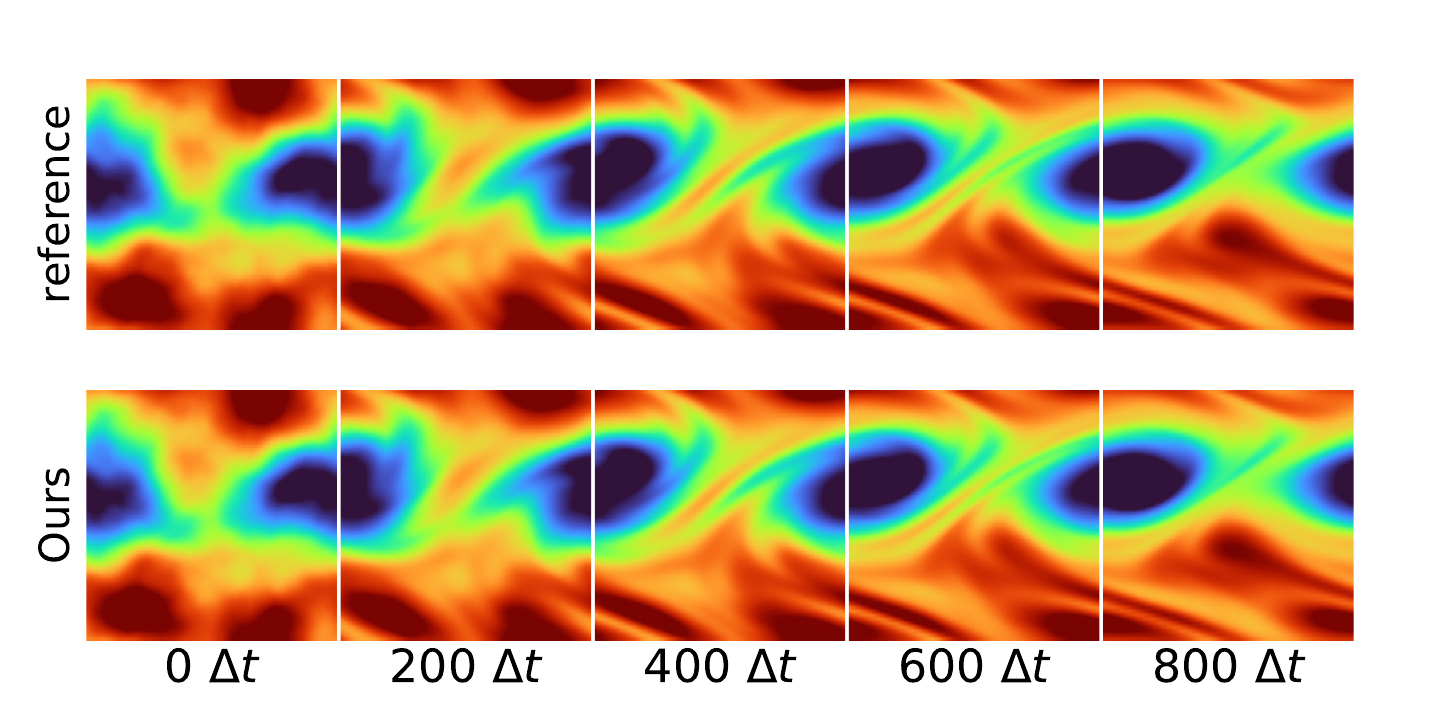}
    \includegraphics[width=0.6\textwidth]{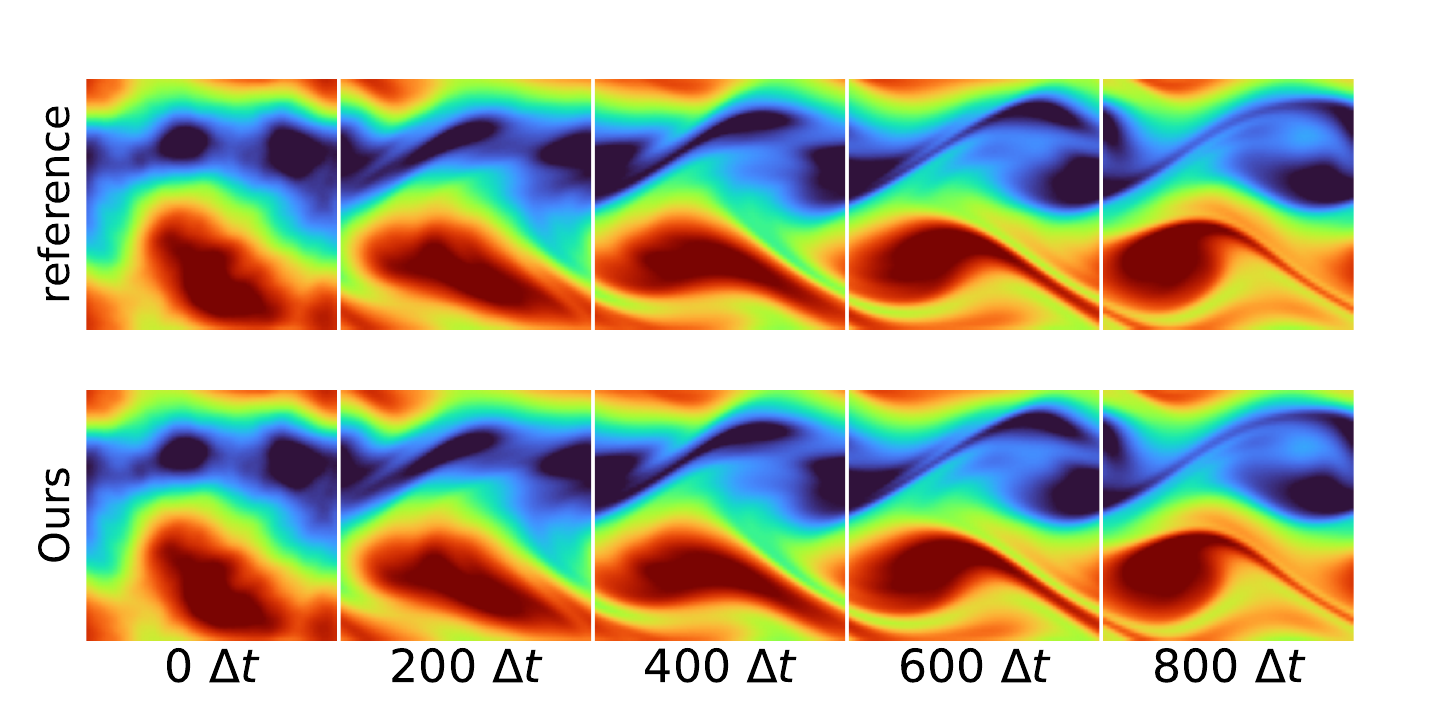}
    \includegraphics[width=0.6\textwidth]{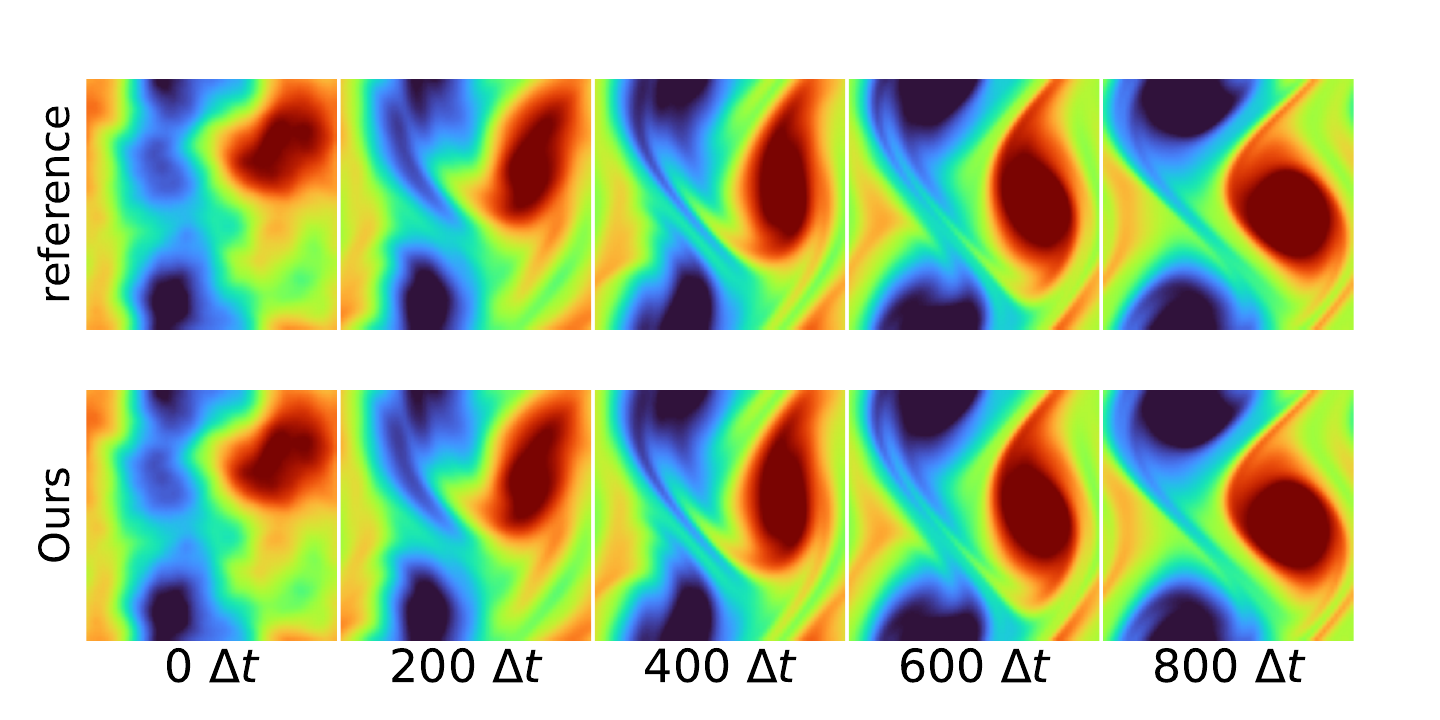}
    \caption{\textbf{NS equation:} Predicted snapshots at different time steps. `reference' and `Ours' stand for the ground truth and the corresponding predictions of PDE-Net++ (TDDL, FNO), respectively.
    }
    \label{fig:ns_result}
\end{figure*}

\end{document}